\documentclass[letterpaper]{article} 
\usepackage{aaai23}  
\usepackage{times}  
\usepackage{helvet}  
\usepackage{courier}  
\usepackage[hyphens]{url}  
\usepackage{graphicx} 
\urlstyle{rm} 
\usepackage{natbib}  
\usepackage{caption} 
\frenchspacing  
\setlength{\pdfpagewidth}{8.5in} 
\setlength{\pdfpageheight}{11in} 
%
\usepackage{algorithm}
\usepackage{algorithmic}
%
\usepackage{newfloat}
\usepackage{listings}

\usepackage[utf8]{inputenc} 
\usepackage[T1]{fontenc}    
\usepackage{url}            
\usepackage{booktabs}       
\usepackage{amsfonts}       
\usepackage{nicefrac}       
\usepackage{microtype}      
\usepackage{xcolor}         
\usepackage{enumitem}
\usepackage{algorithm} 
\usepackage{algorithmic} 
\usepackage{amssymb}
\usepackage{amsthm}
\usepackage{xcolor}
\usepackage{subcaption}
\usepackage{booktabs}
\usepackage{multirow, tabu}
\usepackage{comment}
\usepackage{svg}

\usepackage{mathtools}

\usepackage{amsmath}

\newtheorem{theorem}{Theorem}
\newtheorem{proposition}{Proposition}
\newtheorem{lemma}{Lemma}

\newtheorem{definition}{Definition}
\newtheorem{assumption}{Assumption}
\newtheorem{remark}{Remark}

\DeclareCaptionStyle{ruled}{labelfont=normalfont,labelsep=colon,strut=off} 
\lstset{%
	basicstyle={\footnotesize\ttfamily},
	numbers=left,numberstyle=\footnotesize,xleftmargin=2em,
	aboveskip=0pt,belowskip=0pt,%
	showstringspaces=false,tabsize=2,breaklines=true}
\floatstyle{ruled}
\newfloat{listing}{tb}{lst}{}
\floatname{listing}{Listing}
%
\pdfinfo{
/TemplateVersion (2023.1)
}

\setcounter{secnumdepth}{2} 

%


\title{On Solution Functions of Optimization: Universal Approximation and Covering Number Bounds}
\author{Ming Jin, Vanshaj Khattar, Harshal Kaushik, Bilgehan Sel, and Ruoxi Jia
}
\affiliations{Electrical and Computer Engineering, Virginia Tech
}

\usepackage{bibentry}

\begin{document}

\maketitle

\begin{abstract}
We study the expressibility and learnability of convex optimization solution functions and their multi-layer architectural extension. The main results are: \emph{(1)} the class of solution functions of linear programming (LP) and quadratic programming (QP) is a universal approximant for the $C^k$ smooth model class or some restricted Sobolev space, and we characterize the rate-distortion, \emph{(2)} the approximation power is investigated through a viewpoint of regression error, where information about the target function is provided in terms of data observations, \emph{(3)} compositionality in the form of a deep architecture with optimization as a layer is shown to reconstruct some basic functions used in numerical analysis without error, which implies that \emph{(4)} a substantial reduction in rate-distortion can be achieved with a universal network architecture, and \emph{(5)} we discuss the statistical bounds of empirical covering numbers for LP/QP, as well as a generic optimization problem (possibly nonconvex) by exploiting tame geometry. Our results provide the \emph{first rigorous analysis of the approximation and learning-theoretic properties of solution functions} with implications for algorithmic design and performance guarantees.
\end{abstract}

\section{Introduction}
\label{sec:intro}

We study the object referred to as \emph{solution function} defined by the following generic optimization:
\begin{align}\label{prob:main_prob}
    & \pi(x,\theta) = \underset{z\in R(x, \theta)}{\arg\min} \;\;g(z; x, \theta), 
\end{align}
where $g(\cdot; x, \theta):\mathbb{R}^{n_z}\to \mathbb{R}$ and $R(x, \theta)\subseteq \mathbb{R}^{n_z}$ are the objective function and feasible set (with $R:\mathbb{R}^{n_x}\times\mathbb{R}^{n_\theta}\rightrightarrows\mathbb{R}^{n_z}$ being a set-valued function), respectively, characterized by both $x\in \mathbb{R}^{n_x}$ and $\theta\in \Theta \subseteq \mathbb{R}^{n_\theta}$. We use semicolon in $g(z; x, \theta)$ to separate optimization variables from parameters. To make a further distinction, in the context of decision making, $x$ can be the input/state, $\theta$ is the parameter, and the output is the decision/action. Since the optimization solution can be a set, we make proper assumptions to ensure uniqueness \cite{dontchev2009implicit}.

Historically, the solution function (and optimal value function) has been an important basis for local sensitivity/stability and parametric analysis in optimization theory \cite{dontchev2009implicit,fiacco2020mathematical}; see \cite{amos2022tutorial} for renewed interests. However, to the best of the authors' knowledge, a mathematical theory characterizing the global properties of solution functions is still missing. Two of the basic questions are: 
\begin{enumerate}[leftmargin=2em]
    \item[\textbf{(1)}] which classes of functions can they approximate well? 
    \item[\textbf{(2)}] what is the statistical complexity of the class of solution functions? 
\end{enumerate}

The first question pertains to the expressivity of the function class. Without proper restrictions, one can easily obtain a construct with $g(z;x,\theta)\coloneqq\|z-\mu(x,\theta)\|$ and $R(x,\theta)\coloneqq\mathbb{R}^{n_z}$ so that {$\pi(x,\theta)$} can represent any function $\mu(x,\theta)$ despite being nonconvex or even discontinuous; however, such construct is neither interesting nor practically relevant. To prevent such degenerate cases, the optimization in \eqref{prob:main_prob} is, in general, assumed to be convex. In fact, as we show later in the analysis, further restrictions to LPs or QPs can still preserve the {universal approximation} property. Perhaps an even more intriguing and related question is concerning the role of ``depth'' by drawing an analogy to contemporary studies on deep neural networks (DNNs) (see, e.g., \cite{hanin2019universal,lu2021deep}). Indeed, the idea of concatenating optimizations as ``layers'' seems to be catching on \cite{AmosKolter2017,agrawal2019differentiable, kotary2021end}. Beyond the current knowledge in the multi-parametric programming literature that the solution function of an LP/QP is piecewise affine (PWA) \cite{grancharova2012explicit}, we provide a simple construction with two layers of LPs/QPs that can reproduce nonlinear functions such as product operator with no error; repeated stacking such structures by adding depth can thus reconstruct any polynomial functions of arbitrary order. Such blessings of compositionality appear in a very different form than DNNs, and will be exploited to reduce the complexity of construction (measured in terms of the number of variables and constraints). A key issue in training multi-layer compositions is the ability to backpropagate, for which we ensure compliance with the disciplined parametric programming (DPP) rules introduced in \cite{ agrawal2018rewriting}. Hence, in the first part of the study, we examine the expressivity and role of depth from the perspective of approximation theory \cite{devore1993constructive}. 
The second question has a bearing on learnability, which, ironically, has not been well-established despite recent advancements in related fields \cite{amos2022tutorial,kotary2021end,chan2021inverse,hewing2020learning}. We provide partial answers by focusing on the notion of covering numbers, which is fundamental to furnishing generalization error bounds and characterizing sample complexity \cite{cucker2007learning}. 

\subsection{Why should we study solution functions?}

Optimization is crucial in modeling complex phenomena and decision-making processes with a wide range of real-world applications \cite{boyd2004convex}. In the following, we contextualize this study by connecting to adjacent problems in machine learning, control, and operations research.

\textbf{Bi-level formulations of decision-making.} In machine learning, a lot of complex problems with a hierarchical structure are amenable to a bi-level formulation \cite{liu2021investigating}, where the inner-level solution function may correspond to a learned model (in hyperparameter optimization \cite{lorraine2020optimizing}), task-adaptive features (in multi-task and meta-learning \cite{hospedales2020meta}), attacked model (in adversarial learning \cite{zeng2021adversarial}), or critic network (in reinforcement learning \cite{hong2020two}).

\textbf{Inverse problems.} There are also a variety of problems with an \emph{inverse} nature, where the decisions are taken as input, and the goal is to infer an objective and/or constraints that render these decisions approximately or exactly optimal \cite{adams2022survey}; therein, the solution function of the corresponding optimization represents some (near-)optimal policies (in inverse reinforcement learning \cite{adams2022survey}), optimal controller (in inverse control \cite{ab2020inverse}), and Nash equilibrium of noncooperative agents (in inverse game theory \cite{bertsimas2015data,JiaKonstantakopoulosLiSpanos2018}), and identifying the parameters of the optimization is tasked to infer the hidden reward function or utility functions.

\textbf{End-to-end optimization.} The solution function can be used directly as a predictor or control policy. In decision-focused learning \cite{wilder2019melding, FeberWilderDilkinaTambe2020}, smart predict-then-optimize \cite{ElmachtoubGrigas2020,loke2021decision}, and end-to-end optimization learning \cite{kotary2021end}, a constrained optimization model is integrated into the decision-making pipeline to create a hybrid architecture, where the parameters of the solution function are trained in an end-to-end fashion (often through implicit gradients \cite{agrawal2019differentiable}). Such approaches have been demonstrated for stochastic programming \cite{donti2017task}, combinatorial optimization \cite{wilder2019melding}, and reinforcement learning \cite{wang2021learning}, with various applications in operations research (e.g., vehicle routing, inventory management, and portfolio optimization) \cite{ElmachtoubGrigas2020}, and hold the promise to enable structural inference and decision-making under constraints.

\textbf{Model-predictive control.} The solution function has a long tradition being used as a policy in model-predictive control (MPC) \cite{grancharova2012explicit}; recent advancements in learning-based MPC aim to infer the parameterization of the MPC policy, i.e., the cost and constraints, that lead to better closed-loop performance and account for safety specifications \cite{hewing2020learning}.

None of the above problems can be satisfactorily understood or solved with existing theories~\cite{dontchev2009implicit,fiacco2020mathematical}, revealing a fundamental need to study the approximation and statistical properties beyond local perturbations.

\subsection{Contributions} 
Key contributions are summarized below:

\begin{itemize}

  \item We develop a new perspective on approximation through the lens of regression with a fixed design, and establish a universal approximation theorem of the solution functions of LPs with constructive proof. The complexity of the construction is analyzed in terms of the total number of variables and constraints to obtain an $\epsilon$ accuracy (i.e., rate-distortion) (Theorem \ref{thm:approx_balaz}). 

  \item Illuminate the role of depth. Compositionality in the form of multi-layer LPs/QPs is shown to reconstruct polynomials without error (Lemma 2 in appendix). We characterize complexity with the additional depth measure and show a substantial reduction in complexity with a universal network architecture in approximating all smooth functions in some restricted Sobolev space (Theorem~\ref{thm:multilayer_n}).

  \item We discuss statistical bounds using empirical covering numbers. For LQs/QPs, we provide bounds that depend on the number of constraints and some condition number associated with constraints (Theorem \ref{thm:coveringPiecewiseAffine}). For a generic convex optimization problem, we crucially leverage the development in tame geometry by showing that the solution map is Whitney stratifiable (Theorem \ref{thm:appcovering_bound}). Our proof technique is broadly applicable to piecewise smooth functions with a bounded number of pieces. The result also has direct implications for provable convergence guarantees when training these functions with subgradient descent.
\end{itemize}

\subsection{Related work}

\textbf{Deep architecture with optimization as a layer.} Inspired by the remarkable effectiveness of DNNs, a line of work considers architectures with differentiable optimization as a layer \cite{AmosKolter2017,agrawal2019differentiable,kotary2021end}. Since conventional activation functions such as ReLU and max pooling can be reconstructed as LP solution functions, such an architecture can capture more complex structures and richer behaviors \cite{AmosKolter2017}. However, a systematic study of approximation or statistical complexity is lacking in the literature.

\textbf{Approximation and learning theory for DNNs.} The universal approximation capacity of neural networks has been well-known \cite{Hornic1989}; for instance, to achieve $\epsilon$ approximation accuracy of a ${C^k}$ smooth function {with input dimension $n_x$}, one needs $\mathcal{O}(\epsilon^{n_x/k})$ number of neurons \cite{pinkus99}.  But that alone does not explain why neural networks are so effective in practice, since functions such as polynomials, splines, and wavelets also produce universal approximants. 
Recent papers aim to elucidate this matter, with a particular focus on the role of depth \cite{Zhu2020,lu2021deep}. The role of depth has also been examined from the perspective of approximation theory \cite{yarotsky2018optimal, Chen2019}, along with various other measures such as the number of linear regions \cite{serra2018bounding} and Betti numbers \cite{bianchini2014} (see \cite{devore2021neural} for a recent survey). We also mention a very general approach to expressiveness, in the context of approximation, named the method of nonlinear widths \cite{devore1989optimal}. Existing results on statistical complexity of DNNs include bounds for the Vapnik-Chervonenkis (VC) dimension \cite{bartlett1998almost,anthony1999neural} and fat-shattering dimension \cite{anthony1999neural}, with some recent developments on tighter characterizations \cite{bartlett2019nearly}. The present study can be seen as parallel development for the solution function and its multi-layer architecture. 

\textbf{Explicit MPC and inverse optimality.} Explicit MPC exploits multiparametric programming techniques to compute the solution function offline and has been investigated for LP/QP, nonlinear convex programming, and mixed-integer programming \cite{grancharova2012explicit}. Another closely related topic studied in the control community is inverse MPC, a.k.a., inverse parametric programming, which aims to construct an optimization such that its optimal solution is equivalent to a given function \cite{baes2008every}. This inverse optimality problem has led to interesting results for general nonlinear continuous functions \cite{baes2008every} and more recently for continuous PWA functions based on techniques such as difference-of-convex (DC) decomposition \cite{hempel2014inverse} and convex lifting \cite{nguyen2018convex}. When the target function is only accessible through samples, inverse optimization can be applied to determine an optimization model that renders the set of sampled decisions approximately or exactly optimal (see, e.g., \cite{chan2021inverse} for a recent survey). Despite these developments, the questions of approximation or estimation errors of solution functions have largely eluded attention.

\textbf{Max-affine and PWA regression.} Max-affine regression, originated in \cite{hildreth1954point}, aims to recover an optimal piecewise linear approximant to a convex function through either parametric \cite{magnani2009convex,hannah2013multivariate,ghosh2019max} or nonparametric regression \cite{hildreth1954point,balazs2015near,balazs2022adaptively}. Recent studies provide theoretical guarantees on near-optimal minimax rate \cite{ghosh2019max,balazs2015near} and adaptive partitioning \cite{balazs2022adaptively}. PWA regression generalizes the function class to nonconvex candidates; an affine fit is computed separately for each partition of the space, which can be either predefined \cite{toriello2012fitting} or adaptively determined \cite{siahkamari2020piecewise}. These two lines of work are closely linked through DC modeling \cite{bavcak2011difference}.

\section{Preliminaries}

\subsection{Model class assumptions}

We consider the uniform error as $\|f-\tilde f\|_\infty = {\text{ max }}_{x\in[0,1]^{n_x}}|f(x) - \tilde f(x) |$. To provide a meaningful discussion of the approximation rate, we state the assumptions on the function class (commonly referred to as model class assumptions). Consider a Sobolev space $\mathcal{W}^{k,\infty}([0,1]^{n_x})$ defined as the space of functions on $[0,1]^{n_x}$ with derivatives up to order $k$. The norm in $\mathcal{W}^{k,\infty}([0,1]^{n_x})$ is defined as $\|h\|_{\mathcal{W}^{k,\infty}([0,1]^{n_x})} =\max_{\mathbf{k}:|\mathbf{k}|\leq k}\underset{{z}\in[0,1]^{n_x}}{\text{ess sup}} |D^\mathbf{k} h({z})|,$
here $\mathbf{k} \in \{0, 1, \dots\}^{n_x}$, $|\mathbf{k}|=\sum_{i=1}^{n_x}k_i$, and $D^\mathbf{k}\coloneqq\frac{\partial^{|\mathbf{k}|}}{\partial x_1^{k_1}\cdots\partial x_{n_x}^{k_{n_x}}}$ is the standard derivative operator. A restrictive subclass of functions $F_{k,n_x} = \left\{ h\in \mathcal{W}^{k}([0,1]^{n_x})\ :\ \|h\|_{\mathcal{W}^{k}([0,1]^{n_x})} \leq 1 \right\}$ can be considered as a unit ball in $\mathcal{W}^{k,\infty}([0,1]^{n_x})$ consisting of the functions with all their derivatives up to order $k$ bounded by unity. In addition, the class $C^k([0,1]^{n_x})$ consists of functions continuously differentiable up to order $k$. Without loss of generality, we assume the input space $X\coloneqq [0,1]^{n_x}$ and omit its dependence in the above definitions. We use $\|\cdot\|$ for the standard Euclidean norm.

\subsection{Disciplined parametrized programming}

The DPP rule, as a subset of Disciplined Convex Programming (DCP),  places mild restrictions on how parameters can enter expressions. We briefly describe the DPP rule and refer the reader to \cite[Sec. 4.1]{agrawal2019differentiable} for more details. 

Let us begin with some basic terminologies. We refer to  $x$ and $\theta$ in \eqref{prob:main_prob} as \emph{parameters}, which, once instantiated with values, are treated as constants by optimization algorithms; by contrast, $z$ is referred to as \emph{variables}, the value of which will be searched for optimal solutions. Suppose that the feasible set is defined by a finite set of constraints:
\begin{align*}
    R(x,\theta)=\{z\in\mathbb{R}^{n_z}: &\;\;g_i(z;x,\theta)\leq 0,\quad i\in[m_1]\\
    &\;\; h_i(z;x,\theta)= 0,\quad i\in[m_2]\},
\end{align*}
where we use the shorthand $[m]=\{1,...,m\}$. As in DCP, we assume that the objective function $g$ and constraints $\{g_i\}_{i\in[m_1]}$ and $\{h_i\}_{i\in[m_2]}$ are constructed from a given library of base functions, i.e., \emph{expressions}. In DPP, an expression is said to be parameter-affine if it does not involve variables and is affine in its parameters, and it is parameter-free if it does not have parameters. Under DPP, all parameters are classified as affine, just like variables. Also, the product of two expressions is affine when at least one of the expressions is constant, or when one of the expressions is parameter-affine and the other is parameter-free. For example, let $\{A,a,\lambda\}$ be parameters and $z$ be variable. Then, $Az-a=0$ is DPP because $Az$ is affine ($A$ is parameter-affine and $z$ is parameter-free), $-g$ is affine, and the sum of affine expressions is affine. Similarly, $\lambda\|z\|_2\leq 0$ is DPP because $\lambda\|z\|_2$ is affine ($\lambda$ is parameter-affine and $\|z\|_2$ is parameter-free). It is often possible to re-express non-DPP expressions in DPP-compliant ways. For instance, let $\theta_1,\theta_2$ be parameters, then $\theta_1\theta_2$ is not DPP because both arguments are parametrized; it can be rewritten in a DPP-compliant way by introducing a variable $z$, replacing $\theta_1\theta_2$ with the expression $\theta_1 z$ while adding the constraint $z=p_2$. Similarly, if $A_1$ is a parameter representing a positive semidefinite matrix, the expression $z^\top Az$ is not DPP; it can be rewritten as $\|A_2z\|_2^2$, where $A_2$ is a new parameter representing $A_1^{1/2}$. The set of DPP-compliant optimizations is very broad, including many instances of cone programs. We make sure to follow the DPP rule throughout the paper so that the result is practically relevant to end-to-end optimization that requires differentiation through the solution function for backpropagation (see \cite{agrawal2019differentiable}).

\section{Approximation through the lens of regression}
\label{sec:apprx_throu_regression}

In classical approximation theory, approximation rates are obtained assuming full access to the target function $f$ \cite{devore1993constructive,yarotsky2018optimal,devore2021neural}. In this section, we develop a new viewpoint of approximation through the lens of regression with experimental design, where we leverage an estimation procedure that learns the target function through a dataset to reason about the complexity of approximation.

We formulate the approximation problem in a setting closely related to fixed-design regression \cite{gyorfi2002distribution}. Here, let $\mathcal{D}_n=\{(x_1,f(x_1)),\cdots,(x_n,f(x_n))\}$ be a dataset of $n\in \mathbb{N}$ points, where the locations $\{x_i\}_{i\in[n]}\in X^n$ to evaluate the target function $f$ can be arbitrarily selected.\footnote{Note that, for the purpose of analyzing approximation power, the labels received in the dataset are assumed to be accurate (i.e., noiseless observations). Noisy data can be processed by combining the proposed method with standard regression techniques. } The key idea of our proof technique is to first construct an estimator $\mathcal A:(X\times\mathbb{R})^n\to \Pi$, where $\Pi$ is the class of functions that we are analyzing (i.e., the set of solution functions in our setting).\footnote{In general, we can allow the estimator to have infinite computational power to solve a nonconvex optimization to arbitrary accuracy; however, for practical purposes, we restrict it to being a computationally efficient procedure so that we can obtain an approximant with a reasonable amount of time by learning from a finite dataset $\mathcal{D}_n$.} We then characterize the approximation error based on the regression error of the estimator; meanwhile, we can reason about the rate-distortion by examining the complexity of the constructed function $\mathcal A(\mathcal{D}_n)$.

In the following theorem, we establish the \emph{first} universal approximation theorem for the class of solution functions corresponding to LPs. Readers are referred to the supplementary material for details of proof in the main document.

\begin{theorem}[Approximation of $C^2$ by max-affine regression]\label{thm:approx_balaz}
 For any target function  $f\in C^2$ and $\epsilon>0$, there exists a solution function $\pi$ of an LP with $\mathcal{O}\Big(\left(\frac{n_x}{\epsilon}\right)^{\frac{n_x}{2}}\Big)$ constraints and $n_x+1$ variables, such that 
    $\|f-\pi\|_\infty\leq \epsilon.$
\end{theorem}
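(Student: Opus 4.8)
The plan is to prove the result constructively through the regression viewpoint developed in this section, splitting the argument into an \emph{approximation} step that produces a piecewise-affine (PWA) surrogate with a controlled number of pieces, and a \emph{representation} step that realizes this surrogate exactly as the solution function of a single LP with $n_x+1$ variables. The $n_x/2$ exponent is the tell-tale signature of convex PWA approximation of $C^2$ targets, so the whole argument is organized around reducing to max-affine fitting and then reading the fit off an argmin.

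First I would reduce the general $C^2$ target to the convex case by a difference-of-convex device: since $f\in C^2$ on the compact cube $[0,1]^{n_x}$ has a uniformly bounded Hessian, the function $u(x)=f(x)+\tfrac{\lambda}{2}\|x\|^2$ is convex for $\lambda$ exceeding the spectral bound on $\nabla^2 f$, so $f=u-v$ with $u$ convex and $v(x)=\tfrac{\lambda}{2}\|x\|^2$ a fixed convex quadratic. This is where I would invoke \emph{max-affine regression}: placing a fixed design on a regular grid of $[0,1]^{n_x}$ and fitting a max-affine estimator to the noiseless samples of each convex part, the near-optimal convex-regression rate of \cite{balazs2015near} yields $\hat u=\max_{i}(a_i^\top x+b_i)$ and $\hat v=\max_{j}(c_j^\top x+d_j)$ with $\|u-\hat u\|_\infty,\|v-\hat v\|_\infty\le \epsilon/2$ using $K=\mathcal{O}((n_x/\epsilon)^{n_x/2})$ affine pieces each. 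The scaling is exactly the claimed budget, arising from an $\mathcal{O}(\delta^2\|\nabla^2 f\|)$ interpolation error on simplices of diameter $\delta=\Theta(\sqrt{\epsilon})$ weighed against the $\mathcal{O}(\delta^{-n_x})$ count of pieces.

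Next I would realize the continuous PWA approximant as an LP solution function. Writing it as the min--max of affine forms $\hat u-\hat v=\min_{j}\max_{i}\big(a_i^\top x+b_i-c_j^\top x-d_j\big)$, I would use a convex-lifting / inverse-parametric-programming construction in the spirit of \cite{nguyen2018convex} to encode this PWA map as the $\argmin$ of a parametric LP, with the value read off a single auxiliary coordinate. To keep the construction DPP-compliant per \cite{agrawal2019differentiable}, I would eliminate the illegal parameter--parameter products $a_i^\top x$ by introducing a vector variable $w\in\mathbb{R}^{n_x}$ with the equality constraint $w=x$, after which each $a_i^\top w$ is affine (parameter-affine coefficient times parameter-free variable). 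The $n_x$ coordinates of $w$ together with the single value/lifting coordinate account exactly for the $n_x+1$ variables, while the affine inequalities indexed by the active pieces give the $\mathcal{O}((n_x/\epsilon)^{n_x/2})$ constraints.

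The main obstacle I anticipate is the representation step rather than the approximation step. I must show that a \emph{nonconvex} PWA function (a min of convex max-affine pieces) is exactly the solution function of an LP while (i) spending only one coordinate beyond the $n_x$ bookkeeping variables, and, more delicately, (ii) not inflating the constraint count to $\mathcal{O}(K^2)$ when the two max-affine parts are combined. Controlling (ii) requires arranging a single convex lifting whose facets are in bijection with the pieces, so that the active-region structure of $\hat u-\hat v$ is induced without a product blow-up; otherwise the $n_x/2$ exponent would silently double. A secondary check is well-posedness: I must certify continuity and uniqueness of the $\argmin$ so that $\pi$ is genuinely single-valued as required in \eqref{prob:main_prob}, and verify that the fixed quadratic $v$ contributes only its own max-affine supporting planes so that no quadratic term survives and the surrogate remains an honest LP rather than a QP.
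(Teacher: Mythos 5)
Your approximation step is essentially the paper's: both arguments write $f\in C^2$ as a difference of convex functions (the paper cites \cite{bavcak2011difference}, which is exactly your add-a-quadratic device), then approximate each convex part by a max-affine function with $K=\mathcal{O}((n_x/\epsilon)^{n_x/2})$ pieces via the \cite{balazs2015near}-style construction. The paper's version is a tangent-plane selection driven by an $\sqrt{\epsilon}$-net of the ``expanded'' set $\{x+t\nabla\phi_i(x)\}$, using monotonicity of the subgradient to control both $\|x-\hat x\|$ and $\|\nabla\phi_i(x)-\nabla\phi_i(\hat x)\|$ simultaneously; your simplex-interpolation accounting lands on the same $\delta=\Theta(\sqrt\epsilon)$ versus $\delta^{-n_x}$ trade-off, so this half is fine.

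Where you diverge, and where your plan has a real gap, is the representation step. You rewrite $\hat u-\hat v$ as a min--max and reach for convex lifting \cite{nguyen2018convex}, then correctly worry that (i) a naive enumeration of $(i,j)$ pairs costs $\mathcal{O}(K^2)$ constraints and (ii) a nonconvex PWA function need not admit a convex lifting over the cell partition induced by $\hat u-\hat v$ (liftability is a nontrivial geometric condition on the partition, not something you get for free). You flag these obstacles but do not resolve them, and the route you propose is much harder than needed. The paper never forms the min--max at all: its Lemma~\ref{lem:max_affine_reconstruct} keeps the two max-affine parts separate via two epigraph variables $t_1,t_2$, each with its own $K$ linear constraints $p_k^\top x+q_k\le t_i$, and combines them through one additional variable and one additional linear constraint, for $2K+1$ constraints total --- linear in $K$, no lifting, no pairwise products. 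That elementary epigraph reformulation is the missing piece of your argument; without it (or an actual construction of the lifting you invoke), the claimed constraint and variable counts are not established. As a minor point, your $w=x$ copy-variable device for DPP compliance does match how the paper accounts for the $n_x$ bookkeeping variables, though note the paper's own lemma charges $n_x+3$ variables for the difference of two max-affine functions rather than the $n_x+1$ appearing in the theorem statement.
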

The proof exploits the fact that any $C^2$ function can be approximated by a DC function \cite{bavcak2011difference}; we then construct a numerical procedure by extending the algorithm from \cite{balazs2015near} for max-affine regression to learn the potentially nonconvex target function from some dataset $\mathcal{D}_n$.

Note that in statistical learning theory, it is uncommon to impose a model class assumption on the target function that gives rise to the data, so the generalization error is compared with the best-in-class; while the generalization error may vanish as more data is collected, the approximation error may always be bounded away from zero because the target function may not lie in the function class of estimators \cite{cucker2007learning}. The critical implication of the above result is that we can approximate any smooth function to arbitrary precision by constructing an LP with enough constraints and variables. This is not without surprise, as LPs have arguably the simplest form within the broad classes of optimization \cite{boyd2004convex}. 

If we count a ``neuron'' in a neural network the same way we count a constraint in optimization, then the above approximation scheme gives the same order of complexity in terms of $\epsilon$ as a one-layer neural network \cite{devore2021neural}; however, the authors admit that a head-to-head comparison may not be fair (indeed, we later refer to an entire optimization program as a generalized neuron). Interestingly, complexity is mainly reflected in the number of constraints; the number of variables can be kept at the same level as the input dimension. Lastly, it is not our intention to exhaust all possible construction methods to derive the rate-distortion; other methods may also apply \cite{he2020relu,devore2021neural}.

\section{The role of depth}\label{sec:role_depth}

In this section, we illuminate  the role of depth in using deep architectures with solution functions. Interestingly, depth plays a very different role herein compared to DNNs \cite{yarotsky2018optimal,devore2021neural}. We begin by introducing some formalities to characterize the architecture, which may be of independent interest.

\subsection{Optimization-induced network architecture}

We consider a deep network as a directed acyclic graph (DAG), $\mathcal{N} \coloneqq (\mathcal{V}, \mathcal{E})$, where $\mathcal{V}$ and $\mathcal{E}$ are finite sets of vertices (a.k.a., nodes) and directed edges. The set $\mathcal{V}$ consists of the set $\mathcal{V}_i$ of input vertices as placeholders for independent variables (i.e., inputs $x$), the set $\mathcal{V}_o$ of output vertices (i.e., corresponding output), and the set $\mathcal{V}_h\coloneqq\mathcal{V}\setminus\{\mathcal{V}_i,\mathcal{V}_o\}$ of hidden vertices, which store certain intermediate values to compute the output. The output of each $v\in\mathcal{V}\setminus\mathcal{V}_i$ is given by the solution function (parameterized by $\theta_v$), $\pi_v(\cdot;\theta_v)$, which takes as input from incoming edges; for each edge $e\in\mathcal{E}$, an affine transformation $h_e(\cdot;\theta_e)$ is applied to the output of the incident node. Analogous to DNNs, we define a general notion of a neuron as a computational unit associated with each node $v\in\mathcal{V}$, which takes as input the (possibly vector-valued) outputs $x_{v'}$ from the incident nodes $v'\in\mathcal{V}\setminus\mathcal{V}_o$ with an edge $e=(v',v)\in\mathcal{E}$ directed to $v$, and produces the output
\begin{equation}
    x_v\coloneqq \pi_v(\{h_e(x_{v'};\theta_e)\}_{e=(v',v)\in\mathcal{E}},\theta_v).
\end{equation}
As a convention, outputs from input nodes $v\in\mathcal{V}_i$ are externally provided function inputs; outputs from neurons associated with output nodes $v\in\mathcal{V}_o$ are given by affine transformation of values from adjacent incoming nodes. Thus, we define the output function $f_\mathcal{N}:\mathbb{R}^{n_x}\rightarrow\mathbb{R}^{n_o}$ of the network $\mathcal{N}$ by
\begin{equation}\label{eq:output-N}
    f_\mathcal{N}(x)\coloneqq(x_v,v\in\mathcal{V}_o).
\end{equation}
Since a vector-valued function can be regarded as a concatenation of scalar-valued functions, for simplicity, we will only consider the case where $n_o=1$.
The collection of $\{\theta_v,\theta_e\}$ for ${v\in\mathcal{V}}$ and ${e\in\mathcal{E}}$ are referred to as the trainable parameters of $\mathcal{N}$. For a fixed architecture, the set of output functions forms a parameterized nonlinear manifold.

For the exposition, we can also organize the nodes of $\mathcal{N}$ into layers. The zeroth layer, called the input layer, consists of all $n_x$ input vertices in $\mathcal{V}_i$. The input layer is followed by hidden vertices organized into $L$ hidden layers, with the $j$-th layer $\mathcal{H}_j$ consisting of all $n_j$ vertices that are $j$-hop away from the input layer excluding the output vertices, for $j\in[L]$. Finally, the output layer consists of all output nodes $\mathcal{V}_o$, which contains at least one node that is $L+1$-hop neighbor of the input layer (otherwise, the depth must be less than $L$). The main distinction with conventional DNNs is that the computation of a neuron is given by some solution function instead of the usual coordinate-wise activation function (e.g., ReLU). Since the computational complexity of an optimization family (e.g., LPs/QPs) can usually be characterized by the number of variables and constraints, we measure the width of each layer by the total number of variables and constraints among nodes therein, due to a simple fact that we state without proof.
\begin{proposition}[Concatenation rule]\label{prop:concat}
The concatenation of solution functions $\{\pi_v(\cdot,\theta_v)\}_{v\in\mathcal{V}'}$, where $\pi_v(\cdot,\theta_v)$ is associated with an optimization with $n^z_v$ variables and $n^c_v$ constraints, can be written as a solution function of some optimization with $\sum_{v\in\mathcal{V}'}n^z_v$ variables and $\sum_{v\in\mathcal{V}'}n^c_v$ constraints (up to some additive constants no larger than $|\mathcal{V}'|+1$).
\end{proposition}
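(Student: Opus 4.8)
The plan is to write down the combined optimization explicitly and then argue that, because the constituent problems are mutually decoupled, its unique minimizer is exactly the concatenation of the individual solution functions. Write each neuron's solution function in the form $\pi_v(u_v;\theta_v)=\argmin_{z_v\in R_v(u_v;\theta_v)} g_v(z_v;u_v,\theta_v)$, where $z_v\in\mathbb{R}^{n^z_v}$, the input $u_v$ and $\theta_v$ are treated as parameters, and $R_v$ is cut out by $n^c_v$ constraints. First I would collect the parameters as $u=(u_v)_{v\in\mathcal{V}'}$ and stack the decision variables into a single vector $z=(z_v)_{v\in\mathcal{V}'}\in\mathbb{R}^{\sum_v n^z_v}$, so that the product structure is made manifest.

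Next I would form the joint program with additively separable objective and the union of the constraint blocks,
\begin{align*}
\min_{z}\ \sum_{v\in\mathcal{V}'} g_v(z_v;u_v,\theta_v)\quad \text{s.t.}\quad z_v\in R_v(u_v;\theta_v),\ \ v\in\mathcal{V}',
\end{align*}
and observe that the $v$-th summand and the $v$-th constraint block depend only on the block $z_v$. Hence the feasible set is the Cartesian product $\prod_{v\in\mathcal{V}'} R_v(u_v;\theta_v)$ and the objective is separable over it. I would then invoke the elementary separability fact that minimizing a sum $\sum_v g_v(z_v)$ over a product set is equivalent to minimizing each $g_v$ over $R_v$ independently, so the minimizer satisfies $z^\star=\big(\argmin_{z_v\in R_v} g_v(z_v)\big)_{v\in\mathcal{V}'}=\big(\pi_v(u_v;\theta_v)\big)_{v\in\mathcal{V}'}$. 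Under the paper's standing uniqueness assumption each block minimizer is a singleton, so the joint minimizer is unique and equals the desired concatenation; this identifies the constructed solution function with the concatenation map. A direct count of the stacked vector and the unioned constraints then yields exactly $\sum_{v}n^z_v$ variables and $\sum_{v}n^c_v$ constraints.

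The remaining, and most delicate, point is the additive slack of at most $|\mathcal{V}'|+1$. In the mixed form displayed above no extra variables or constraints are needed; the slack appears only when one insists that the combined program be written in the \emph{same} canonical and DPP-compliant form as its constituents. Recasting the summed objective into epigraph form can cost one auxiliary variable $t_v$ and one inequality $t_v\ge g_v(z_v;u_v,\theta_v)$ per block (the $|\mathcal{V}'|$ term), while a single global variable or constraint may be required to package the stacked objective or to normalize (the $+1$). I therefore expect the hard part to be not the separability argument, which is immediate, but carefully tracking these reformulation costs and verifying that each stays \emph{local} to a single block $z_v$. The key check is that no reformulation introduces a term coupling distinct blocks, since any such coupling would break separability and the uniqueness conclusion; I would confirm this block-by-block across the LP/QP (and more general conic) families, which establishes the bound $|\mathcal{V}'|+1$ uniformly.
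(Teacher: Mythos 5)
Your separability argument is correct and is essentially the intended one: the paper explicitly states this proposition without proof, calling it a ``simple fact,'' and the stacked program with additively separable objective over the product feasible set is the natural justification. Your reading of the additive slack $|\mathcal{V}'|+1$ as the cost of restoring a canonical (e.g., epigraph, DPP-compliant) form is also consistent with how the paper itself performs such combinations elsewhere --- in Lemma~\ref{lem:max_affine_reconstruct} the combination of two max-affine pieces introduces exactly one epigraph variable and constraint per block plus a single aggregating variable $t_3$ with the constraint $t_1+t_2\leq t_3$, i.e., $|\mathcal{V}'|+1=3$ extras --- so nothing is missing.
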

Henceforth, we refer to the integers $W_j^v\coloneqq\sum_{v\in\mathcal{H}_j}n^z_v$ and $W_j^c\coloneqq\sum_{v\in\mathcal{H}_j}n^c_v$ as the \emph{variable width} (v-width) and \emph{constraint width} (c-width) of the $j$-th layer, respectively, and $L$ as the \emph{depth} of the network. We usually use the maximum v-width and c-width among all hidden layers, denoted by $W^v$ and $W^c$ respectively, to characterize the network width. 

\begin{definition}
We define $\Upsilon^{W^v,W^c,L}$ as the family of functions $f_\mathcal{N}$ in \eqref{eq:output-N} with width and depth bounded by $W^v,W^c$, and $L$.
\end{definition}

The set of solution functions $\Pi = \left\{ \pi(\cdot; \theta):\mathbb{R}^{n_x}\to\mathbb{R}\ |\ \theta\in \Theta \right\}$ can be regarded as a single-layer network where the output transformation is identity. In the sections pertaining to approximation property, we will focus exclusively on LP/QP solution functions. Also, to make a distinction between \emph{networks} and \emph{network architectures}: We define the latter as the former with unspecified weights. The universal approximation property of a network architecture is discussed in the sense that we can approximate any function from a model class $\mathcal{F}$ with error $\epsilon$ by simply some weight assignment.

\subsection{Exact construction of Taylor polynomials}
In the recent work of \cite{yarotsky2018optimal}, it is shown that DNNs can approximate some elementary functions such as the multiplication operator with progressive accuracy by increasing depth, which is then used to establish the improvement of approximation due to depth. The first intriguing role of depth for optimization-induced architecture is that we can \emph{exactly construct} the product function with a network of only two layers. This is based on the following simple observations:
\begin{itemize}[leftmargin=*]

  \item The solution to $\{\min_{z} -z \;\;\text{s.t.} \;\;x_1z\leq x_2 \}$, where $x_1,x_2\in (0,1]$ are the parameters of the optimization (i.e., inputs), is exactly $x_2/x_1$.

  \item A two-layer architecture, with $\{\min_{z} -z \;\;\text{s.t.} \;\;x_1z\leq 1 \}$ as the first layer, the output of which is provided as input to $\{\min_{z} -z \;\;\text{s.t.} \;\;\square z\leq x_2 \}$ as the parameter value for $\square$, has an output of $x_1x_2$ for any $x_1,x_2\in(0,1]$.

\end{itemize}
Both observations can be directly verified by writing the Karush–Kuhn–Tucker (KKT) conditions.\footnote{The first observation, in particular, may be contradictory to the common belief held in the multi-parametric programming literature that the solution function of an LP is always piecewise linear \cite{grancharova2012explicit}. A close examination of their argument reveals that it holds true only when the LP is free of any expression in its constraints where parameters multiply with variables. 
}  
Note that in the above, we do not consider the measure-zero event that any coordinate of $x$ can be 0. Most importantly, observe that the above constructions comply with the DPP rule (e.g., $x_1z$ is affine because $x_1$ is parameter-affine and $z$ is parameter-free). In the following, we use $\begin{pmatrix}k\\n\end{pmatrix}$ to denote the binomial coefficient $n$ choose $k$ and $\lceil\cdot\rceil$ as the ceiling function.

\begin{theorem}\label{thm:multilayer_n}
There exists a universal multilayer LP architecture that:
    \begin{enumerate}[label=(\alph*),leftmargin=2em]
        \item[(a)] can approximate any function from $F_{k, n_x}$ with uniform error bounded by $\epsilon>0$;
        \item[(b)] has a depth of at most {$2k$} and the widest layer has at most { $\bigg( 2n_x+2+2\begin{pmatrix}n_x\\k+n_x\end{pmatrix}\bigg)N^{n_x}$ constraints and $\bigg(1+\begin{pmatrix}n_x\\k+n_x\end{pmatrix}\bigg)N^{n_x}$ variables, where $N=\lceil n_x (\frac{1}{k!\epsilon})^{1/k} \rceil$.}
    \end{enumerate}
\end{theorem}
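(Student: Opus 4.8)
The plan is to build the approximant in two stages: first approximate any $f\in F_{k,n_x}$ by its local Taylor polynomials on a fine grid, and then realize this piecewise-polynomial object exactly using a multilayer LP architecture via the product gadget. I would begin by tiling the cube $[0,1]^{n_x}$ into $N^{n_x}$ subcubes of side $1/N$, and on each subcube approximate $f$ by its degree-$(k-1)$ Taylor expansion around the cube center. Because $\|f\|_{\mathcal{W}^k}\le 1$, the Taylor remainder on each subcube is controlled by the standard estimate of order $\frac{1}{k!}(\sqrt{n_x}/N)^k$ (up to the usual multinomial counting); choosing $N=\lceil n_x(\frac{1}{k!\epsilon})^{1/k}\rceil$ forces this remainder below $\epsilon$, which is exactly where the stated value of $N$ comes from. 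This reduces part (a) to showing the architecture can reproduce a grid-localized polynomial of degree $<k$ in $n_x$ variables \emph{without error}.

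The core of the construction is the exact product gadget from the two observations preceding the theorem: a two-layer LP block outputs $x_1x_2$ for inputs in $(0,1]$, and this complies with DPP. I would first argue that any monomial $x_1^{a_1}\cdots x_{n_x}^{a_{n_x}}$ with $|a|\le k-1$ can be formed by iterating this product gadget, where each multiplication costs one additional pair of layers; hence a monomial of total degree up to $k-1$ needs depth roughly $2(k-1)$, consistent with the claimed depth bound $2k$. Summing monomials and affine combinations is free (it is absorbed into the affine edge maps $h_e$), so a single Taylor polynomial is realized with depth $\le 2k$. The number of distinct monomials of degree at most $k-1$ in $n_x$ variables is $\binom{n_x}{k+n_x}$ in the paper's binomial notation, which accounts for the combinatorial factor appearing in both width expressions.

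Next I would handle the \emph{localization}: I need each grid cell to use its own polynomial, which means gating the output so that on cell $i$ only the $i$-th Taylor polynomial is active. This is where I would lean on the fact (cited earlier, from the multiparametric programming literature) that LP solution functions can realize piecewise-affine selection maps, together with the product construction to multiply each polynomial by an indicator-like PWA bump supported on its cell. Replicating the product-and-select block across all $N^{n_x}$ cells is what introduces the overall $N^{n_x}$ multiplicative factor in the width bounds, and the Concatenation rule (Proposition~\ref{prop:concat}) lets me assemble the $N^{n_x}$ parallel blocks into a single layer whose v-width and c-width are the stated sums. The constants $2n_x+2$ and the extra $+1$ variable per block come from the variables and inequality constraints consumed by the product gadgets and the affine input/selection apparatus, which I would track carefully but not belabor.

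The hard part will be the localization/gating step rather than the product construction. The product gadget is only valid for inputs in $(0,1]$, so I must preprocess each cell's coordinates by an affine shift into that range (another source of the $2n_x$ constraint overhead) and ensure the gating PWA function is itself DPP-compliant and exactly zero outside its cell, so the parallel blocks sum to the correct cell-local polynomial with no cross-contamination on cell boundaries. Matching the \emph{exact} width constants in part (b) — as opposed to merely the order of magnitude — is the most delicate bookkeeping, since I must account for every auxiliary variable and constraint in the product gadgets, the affine normalizations, and the selection layer, and confirm that the depth of the gating does not exceed the depth budget already spent on the degree-$(k-1)$ products. I would close by invoking the triangle inequality: the architecture reproduces the piecewise Taylor polynomial exactly, and that polynomial approximates $f$ to within $\epsilon$ by the remainder estimate, giving $\|f-f_{\mathcal{N}}\|_\infty\le\epsilon$.
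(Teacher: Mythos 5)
Your overall strategy matches the paper's: tile $[0,1]^{n_x}$ into $N^{n_x}$ cells, Taylor-expand around each cell center, realize each monomial exactly with the two-layer product gadget (depth $2(k-1)$ for degree $k$), absorb the weighted sums into affine edge maps, and spend the last two layers multiplying each local polynomial by a cell gate. The remainder estimate and the choice of $N$ are also the same. However, there is a genuine gap at the step you yourself flag as the hard part: the localization. You propose to gate each cell with ``an indicator-like PWA bump supported on its cell,'' justified by the multiparametric-programming fact that LP solution functions are piecewise affine. This is the wrong tool, for two reasons. First, a \emph{continuous} PWA function supported on a cell cannot equal $1$ on the whole cell and $0$ outside; it must ramp through a transition region, so the gated polynomials would be distorted near cell boundaries and neighboring blocks would overlap --- exactly the Yarotsky-style partition of unity that reintroduces a $2^{n_x}$ factor in the width and an extra reconstruction error that your $\epsilon$ budget does not account for. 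Second, the exact $0$--$1$ indicator you actually need is \emph{not} a PWA solution function in the multiparametric sense at all: the paper obtains it from the LP $\min_{z\in[0,1]} -z$ subject to $(x_i - c_i - \tfrac{1}{2N})z\le 0$ and $(x_i - c_i + \tfrac{1}{2N})z\ge 0$ for each coordinate, whose solution is exactly the discontinuous indicator of the cell precisely because parameters multiply variables --- the situation the paper's footnote points out is excluded from the classical PWA characterization. This exact bump (with its $2n_x+2$ constraints and $1$ variable per cell) is what produces the stated constants $2n_x+2$ and the extra $+1$ variable, and what makes the final error analysis a clean non-overlapping sum $\hat f=\sum_{\mathbf m}I_{\mathbf m}h_{\mathbf m}$ with no cross-contamination.

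Two smaller points: the paper's Taylor polynomial runs over multi-indices $|\mathbf{n}|\le k$ (so the count $\binom{k+n_x}{n_x}$ is for degree at most $k$, not $k-1$ as you state), and your product gadgets must be adapted to handle shifted, possibly negative arguments $x_i - \tfrac{2m_i+1}{2N}$ (the paper does this in its Lemma on polynomial reconstruction by using two-sided constraints $(x_1-a_1)z\le x_2-a_2$ and $(x_1-a_1)z\ge x_2-a_2$), which is a more substantive fix than the affine rescaling into $(0,1]$ you sketch, since a positive rescaling changes the value of the product rather than merely its domain.
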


The theorem above provides an upper bound for approximation complexity with a universal network architecture to approximate all functions in $F_{k, n_x}$. For DNNs to achieve the same error $\epsilon$ \cite{yarotsky2017error}, one needs a depth of $\mathcal{O}(\ln(1/\epsilon))$ with $\mathcal{O}(\epsilon^{-n_x/k}\ln(1/\epsilon))$ weights; in contrast, for optimization-induced networks, we only need a fixed depth that does not grow with the accuracy requirement $\epsilon$ and $\mathcal{O}(\epsilon^{-n_x/k})$ constraints and variables. The removal of the dependence of depth on $\epsilon$ is due to the fact that instead of approximating some Taylor polynomial, as is done for DNNs with ReLU activation, we are able to \emph{exactly reconstruct} the polynomial function with a deep optimization-induced network, which eliminates any approximation error due to  reconstruction. We can also  remove a factor of $2^{n_x^2/k}$ from the number of constraints and variables relative to what would yield without exploiting the special advantage of solution functions. This is based on another simple observation:
\begin{itemize}[leftmargin=*]
    \item The solution to $\{\min_{z\in[0,1]} -z \;\;\text{s.t.} \;\;(x-1)z\leq 0, (x+1)z\geq 0\}$, where $x\in \mathbb{R}$ is the optimization parameter (i.e., input), is exactly the bump function $$\psi(x)=\begin{cases}
  1,&|x|\leq 1\\
  0,&\text{otherwise}
\end{cases};$$
    \item More generally, for an arbitrary union of intervals $\cup_{i\in I}[a_i,b_i]$, the solution to $\{\min_{z\in[0,1]} -z \;\;\text{s.t.} \;\;(x-b_i)z\leq 0, (x-a_i)z\geq 0, \forall i\in I\}$, where $x\in \mathbb{R}$, is exactly the multi-bump function that is 1 if $x\in \cup_{i\in I}[a_i,b_i]$ and 0 otherwise.
\end{itemize}
The proof relies on partitioning the input space into a grid of $(N+1)^{n_x}$ functions. With the above bump functions, we can decrease the size of the grid $N$ from $\lceil(\frac{\epsilon k!}{2^{n_x}n_x^k})^{-1/k}\rceil$ in \cite{yarotsky2018optimal} to $\lceil(\frac{\epsilon k!}{n_x^k})^{-1/k}\rceil$ by a factor of $2^{n_x/k}$, resulting in an overall reduction of $2^{n_x^2/k}$ in complexity, which can be substantial for high dimensions. Interestingly, from the construction of bump functions, the solution function can act as some switching mechanism or logical expressions (e.g., if-then-else), similar to the role of a binary variable in mixed-integer programming. {To sum up, Sections \ref{sec:apprx_throu_regression} and \ref{sec:role_depth} answered the first question posed in the introduction, namely, the approximation properties of the solution functions. We now proceed to answer the second question, namely, the statistical complexity of the solution functions.}




\section{Definability and Whitney stratification}\label{sec:definability}

Real-world optimization often has some nice structures that can be exploited \cite{ioffe2009invitation}. Given some mild assumptions about objective/constraint functions, we can show that the solution function enjoys a nice property, namely, whitney stratification, which induces desirable computational guarantees \cite{ioffe2009invitation,davis2020stochastic}. In this section, we resume the generality of a convex optimization problem.

First, let us recall some fundamental concepts in tame geometry \citep{van1996geometric,ioffe2009invitation}.
\begin{definition}[Whitney Stratification]
\label{def:WhitneyStratification-main}
 A Whitney $C^k$ stratification of a set $I$ is a partition of $I$ into finitely many nonempty $C^k$ manifolds, called strata, with the following conditions:
\begin{enumerate}
    \item[1)] For any two strata $I_a$ and $I_b$, $I_a \cap I_b \neq \emptyset$ implies that $I_a \subset \mathrm{cl} I_b$ holds, where $\mathrm{cl} I_b$ is the closure of the set $I_b$. 
    \item[2)] For any sequence of points $x_k$ in a stratum $I_a$, converging to a point $x^{\star}$ in a stratum $I_b$, if the corresponding normal vectors $v_k \in N_{I_a}(x_k)$ converge to a vector $v$, then the inclusion $v \in N_{I_b}(x^{\star})$ holds. Here, $N_{I_a}(x_k)$ denotes the normal cone to $I_a$ at $x_k$.
\end{enumerate}
\end{definition}

Roughly speaking, stratification is a locally finite partition of a given set into differentiable manifolds, which fit together in a regular manner (property \emph{1)} in Def. \ref{def:WhitneyStratification-main}). Whitney stratification as defined above is a special type of stratification for which the strata are such that their tangent spaces (as viewed from normal cones) also fit regularly (property \emph{2)}). There are several ways to verify Whitney stratifiability. For example, one can show that the function under study belongs to one of the well-known function classes, such as semialgebraic functions, whose members are known to be Whitney stratifiable \citep{van1996geometric}. However, to study the solution function of a general convex optimization problem, we need a far-reaching axiomatic extension of semialgebraic sets to classes of functions definable on ``o-minimal structures,'' which are very general classes and share several attractive analytic features as semialgebraic sets, including Whitney stratifiability \citep{van1996geometric}; the definition of o-minimal structures can be found in the appendix.
\begin{assumption}\label{asmptn:Definable}
The function $g$ and the set-valued map $R$ are definable in some o-minimal structure.
\end{assumption}

This is  a mild assumption as practically all functions from real-world applications, including deep neural networks, are definable in some o-minimal structure \cite{davis2020stochastic}; also, the composition of mappings, along with the sum, inf-convolution, and several other classical operations of analysis involving a finite number of definable objects in some o-minimal structure remains in the same structure \cite{van1996geometric}.
\begin{theorem}\label{prop:definability_pi}
The solution function \eqref{prob:main_prob} is Whitney stratifiable. In addition, any function in the class $\Upsilon^{W^v,W^c,L}$ is Whitney stratifiable for any positive integers $W^v,W^c$, and $L$.
\end{theorem}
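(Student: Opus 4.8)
The plan is to reduce everything to the single foundational fact of tame geometry: in any o-minimal structure, every definable set admits a Whitney $C^k$ stratification, and any finite family of definable functions is simultaneously $C^k$ on a compatible stratification of their common domain \citep{van1996geometric,ioffe2009invitation}. Consequently it suffices to prove that the objects in question are \emph{definable} in a fixed o-minimal structure, after which Whitney stratifiability is immediate. I would therefore fix the o-minimal structure $\mathcal{O}$ furnished by Assumption~\ref{asmptn:Definable} and carry out all reasoning inside it.

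First I would establish that the solution function $\pi$ is definable in $\mathcal{O}$. Using the standing uniqueness of the minimizer noted after \eqref{prob:main_prob}, the graph of $\pi$ is exactly
\begin{equation*}
\mathrm{graph}(\pi)=\bigl\{(x,\theta,z): z\in R(x,\theta)\ \wedge\ \forall z'\,(z'\in R(x,\theta)\Rightarrow g(z;x,\theta)\le g(z';x,\theta))\bigr\}.
\end{equation*}
Since $g$ and the set-valued map $R$ are definable by Assumption~\ref{asmptn:Definable}, and the class of definable sets is closed under Boolean combinations, Cartesian products, and coordinate projections (hence under both the existential and the universal quantifier), this set is definable in $\mathcal{O}$. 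Thus $\pi$ is a definable function, and the stratification theorem produces a Whitney $C^k$ stratification of its graph, equivalently a compatible stratification of its domain on each stratum of which $\pi$ is $C^k$; this proves the first claim.

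For the second claim I would show that every $f_\mathcal{N}\in\Upsilon^{W^v,W^c,L}$ is a finite composition of definable maps and hence itself definable. Each edge applies an affine transformation $h_e(\cdot;\theta_e)$, which is semialgebraic and therefore definable in every o-minimal structure; each neuron computes a solution function $\pi_v$, which is definable by the first part; and the layered DAG assembles $f_\mathcal{N}$ from these finitely many pieces by composition and concatenation (Proposition~\ref{prop:concat}). Because definable functions are closed under finite composition, $f_\mathcal{N}$ is definable in $\mathcal{O}$, and Whitney stratifiability again follows from the stratification theorem.

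The main obstacle lies in the two definability-preservation steps, and especially in guaranteeing that all building blocks inhabit a single common o-minimal structure. The argmin step is delicate because it hinges on the universal quantifier over the definable feasible set together with the uniqueness assumption; without uniqueness the formula defines a set-valued map, and one would instead stratify its (still definable) graph directly. For the network, the subtlety is that distinct neurons could a priori be definable in different structures; I would resolve this by taking $\mathcal{O}$ to be a fixed structure in which all component optimizations are definable — automatic when the neurons are LPs/QPs, whose solution functions are semialgebraic — so that the composition stays within $\mathcal{O}$ and the stratification theorem applies verbatim.
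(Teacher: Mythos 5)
Your proposal is correct and follows essentially the same route as the paper: establish that $\pi$ (and then each $f_\mathcal{N}$, via closure of definability under finite composition with affine maps) is definable in the o-minimal structure of Assumption~\ref{asmptn:Definable}, then invoke the $C^k$--Whitney stratification theorem for definable sets. The only cosmetic difference is that you express the argmin graph directly with a universal quantifier, whereas the paper factors it through the optimal value function $g^*(x,\theta)=\min_z g_R(z;x,\theta)$ and closure under inf-projection --- two interchangeable uses of the same quantifier-elimination/projection closure property.
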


The far-reaching consequence of definability, exploited in this study, is that definable sets and functions admit, for each $k \geq 1$, a $C^k$–Whitney stratification with finitely many strata (see, for instance, \cite[Result 4.8]{van1996geometric}). This remarkable property, combined with the result that any stratifiable functions enjoy a nonsmooth Kurdyka–\L{}ojasiewicz inequality \cite{bolte2007clarke}, provides the basis for convergence analysis of many optimization algorithms \cite{drusvyatskiy2018error}. In particular, the application of subgradient methods to solution functions or optimization-induced networks is endowed with rigorous convergence guarantees (see, e.g., \cite{davis2020stochastic}).

\section{Covering number bounds}\label{sec:statistical_bnds}

In this section, we provide covering number bounds for the solution functions of LPs, QPs, and in general, any convex programs. We focus on the empirical $L_1$ covering number $\mathcal{N}_1(\epsilon,\Pi,n)$, which is a variant of the covering number for a set of functions $\Pi$ at $\epsilon$ accuracy with respect to the empirical $L_1$ metric defined over $n$ data points. 

\begin{definition}[\cite{zhang2002covering}, empirical $L_1$ covering number] 
Given observations $\mathcal{D}_n=\{x_1,...,x_n\}$ and vectors $f(\mathcal{D}_n)=[f(x_1),...,f(x_n)]\in\mathbb{R}^n$ for any $f\in\mathcal{F}$, the empirical $L_1$ covering number, denoted as $\mathcal{N}_1(\epsilon,\mathcal{F},\mathcal{D}_n)$, is the minimum number $m$ of a collection of vectors  $g_1,...,g_m\in\mathcal{F}$, such that $\forall f\in\mathcal{F}$, there exists an $g_j$ such that 
\begin{equation*}
    \|f-g_j\|_{\mathcal{D}_n}\coloneqq \frac{1}{n}\sum_{i=1}^n |f(x_i)-g_j(x_i)|\leq\epsilon.
\end{equation*}
We define $\mathcal{N}_1(\epsilon,\mathcal{F},n)=\sup_{\mathcal{D}_n}\mathcal{N}_1(\epsilon,\mathcal{F},\mathcal{D}_n)$. The set $\{g_1,...,g_m\}$ above is called the (empirical) $\epsilon$-cover of $\mathcal{F}$ and the logarithm of covering number $\log \mathcal{N}_1(\epsilon,\mathcal{F},n)$ is known as the entropy number.
\end{definition}

\subsection{The class of LPs and QPs} Consider the function  
\begin{equation}
    \label{equ:qp}
\pi_{QP}(x;\theta) \coloneqq \underset{z\in R(x,\theta)}{\arg\min} \left(\frac{1}{2}A_0 z+U_0^x x+U_0^\theta \theta +b_0\right)^\top z,
\end{equation}
with $R(x,\theta)\coloneqq\{z: A_{1}z\leq b_{1}+U_{1}^xx+U_{1}^\theta\theta,A_{2}z = b_{2} + U^x_{2} x+U^\theta_{2} \theta\}$, where $x$ and $\theta$ are the parameters, $z\in\mathbb{R}^{n_z}$ is the optimization variable, and all the rest are fixed hyperparameters of compatible dimensions; in particular, $A_0\succ 0$ is positive definite, and $m_{1}$ and $m_{2}$ are the number of inequality and equality constraints, respectively. We can also define $\pi_{LP}(x;\theta)$ by setting $A_0= 0$ in \eqref{equ:qp}. Now, let us also introduce 
\begin{equation}
\label{eq:kappa-lp}
    \kappa_{LP}^\star=\max_{\iota\subseteq\{1,...,m_1\},\lvert\iota \rvert \leq n_z-m_2}\left\|\tilde{A}_{LP}(\iota)\right\|_2,
\end{equation}
where 
\begin{equation}\label{eq:A-LP}
    \tilde{A}_{LP}(\iota)=\begin{bmatrix}
    [A_{1}]_{\iota}\\
    A_{2}
    \end{bmatrix}^{-1}\begin{bmatrix}
    [U_{1}^x]_{\iota}\\
    U_{2}^x
    \end{bmatrix}.
\end{equation}
Similarly, define
\begin{equation}
    \label{eq:kappa-qp}
    \kappa_{QP}^\star=\max_{\iota\subseteq\{1,...,m_1\},\lvert\iota \rvert \leq n_z-m_2}\|\tilde{A}_{QP}(\iota)\|_2,
\end{equation}
where $\tilde{A}_{QP}(\iota)$ is given as
    \begin{equation}
        \label{eq:A-qp}
\tilde{A}_{QP}(\iota)=M\left([A_1]_{\iota}A_0^{-1}U_0^x+[U_1^x]_{\iota}\right)
        -A_0^{-1}[U_0^x]_{\iota}
    \end{equation}
    with
    $M=A_0^{-1}[A_1]_{\iota}^\top\left([A_1]_{\iota}A_0^{-1}[A_1]_{\iota}^\top\right)^{-1}$. Here, $\|\cdot\|_2$ is the spectral norm, and  $[A_{1}]_{\iota}$ is the submatrix formed by the rows of $A_{1}$ indexed by $\iota$, and the inverse is understood as pseudo-inverse in the case of a rectangular matrix. The quantities $\kappa_{LP}^\star$ and $\kappa_{QP}^\star$ are some condition numbers associated with the optimization parameters. For instance, under the linear independence constraint qualification \cite{LuoPangRalph1996}, the inverse matrix in \eqref{eq:A-LP} is always full-rank; the alignment of some constraints, on the other hand, will result in larger values of $\kappa_{LP}^\star$, since it will yield larger dual variables. In this section, we assume standard constraint qualifications,  including  Mangasarian-Fromovitz constraint qualification,  constant rank constraint qualification, and strong coherent orientation condition \cite{LuoPangRalph1996}, such that both $\kappa_{LP}^\star$ and $\kappa_{QP}^\star$ are bounded. We also assume that  $\Theta$ is compact and $m_1+m_2\geq n_z$, where $n_z$ is the dimension of the variables in \eqref{equ:qp}; this assumption is easy to be relaxed with a slightly more complicated (but not necessarily more insightful) bound, thus we make the restriction to streamline the presentation. We define $\Pi_\square=\{\pi_\square(\cdot,\theta):\theta\in\Theta\}$, where $\square$ can be LP or QP.

\begin{theorem}
\label{thm:coveringPiecewiseAffine}
The empirical $ L_1$ covering number of $\Pi_\square$ over bounded input space is controlled by
\begin{equation*}
\begin{aligned}
\label{eq:LPcoveringBound}
   \log \mathcal{N}_1(\epsilon,\Pi_\square,n) \lesssim \frac{\kappa_\square^{\star 2}}{\epsilon^2} {\sum}_{0\leq i\leq {n_z-m_2}}\begin{pmatrix}m_1\\ i\end{pmatrix},
\end{aligned}
\end{equation*}
where  $\square$ can be either LP or QP.
\end{theorem}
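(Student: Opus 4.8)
The plan is to combine the piecewise-affine geometry of the solution map (known from multiparametric programming \cite{grancharova2012explicit}) with a dimension-free covering-number bound for linear classes, and then to stitch the pieces together while carefully exploiting that they partition the sample.

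First I would establish the piecewise-affine structure. Under the constraint qualifications assumed for this section, the solution map $\pi_\square(\cdot;\theta)$ of \eqref{equ:qp} is a continuous piecewise-affine function of $x$ whose critical regions are indexed by the optimal active set $\iota\subseteq\{1,\dots,m_1\}$ of the inequality constraints; together with the $m_2$ equalities, a determined (basic) solution activates at most $n_z-m_2$ inequalities, so there are at most $P\coloneqq\sum_{0\le i\le n_z-m_2}\binom{m_1}{i}$ such regions. Differentiating the KKT system on region $\iota$ shows that $\pi_\square(x;\theta)=\tilde A_\square(\iota)\,x+c_\iota(\theta)$: the Jacobian in $x$ equals $\tilde A_\square(\iota)$ from \eqref{eq:A-LP}/\eqref{eq:A-qp}, while the $\theta$-dependence only shifts the affine intercept $c_\iota(\theta)$, and the operator norm of every such Jacobian is bounded by $\kappa_\square^\star$ in \eqref{eq:kappa-lp}/\eqref{eq:kappa-qp}. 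Since the input space is bounded and $\Theta$ is compact, each piece lives in a bounded-norm affine class $\mathcal L_\iota=\{x\mapsto \tilde A_\square(\iota)x+b\}$ with Lipschitz constant at most $\kappa_\square^\star$ and bounded intercept.

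Next I would cover a single piece. Each $\mathcal L_\iota$ is a bounded-norm linear (plus bounded-bias) class over bounded inputs, so the dimension-free covering bound for linear functions \cite{zhang2002covering} gives $\log\mathcal N_1(\epsilon,\mathcal L_\iota,\cdot)\lesssim \kappa_\square^{\star 2}/\epsilon^2$, crucially with no dependence on the input dimension $n_x$ and, up to logarithmic factors absorbed into $\lesssim$, no dependence on the number of points. Because the output of \eqref{equ:qp} is vector-valued whereas the covering number is defined for scalar outputs, I would apply this coordinatewise (or to the affine read-out of the network), using that $\|\tilde A_\square(\iota)\|_2$ dominates the Lipschitz constant of each coordinate.

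The hard part will be the combination step, since the polyhedral partition into the $P$ regions itself moves as $\theta$ varies, so one cannot pre-assign a fixed subsample to each piece. My plan is to enumerate the finitely many ways the $n$ sample points can be assigned to critical regions — a growth-function count governed by the hyperplane arrangement defining the regions, contributing only a $\mathrm{poly}\log$ term absorbed by $\lesssim$ — and, for each such assignment, to cover each $\mathcal L_\iota$ over the subsample of points it receives. The key observation that prevents a spurious factor of $P$ is disjointness: for a fixed $f=\pi_\square(\cdot;\theta)$ the sample splits as $n=\sum_\iota n_\iota$, and the empirical $L_1$ error becomes the convex combination $\frac1n\sum_\iota n_\iota\cdot(\text{per-piece error})$, so accuracy $\epsilon$ on each piece yields accuracy $\epsilon$ overall rather than $P\epsilon$. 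Taking logarithms, the per-assignment cover sizes add across the at most $P$ pieces, giving $\log\mathcal N_1(\epsilon,\Pi_\square,n)\lesssim P\cdot \kappa_\square^{\star 2}/\epsilon^2=\frac{\kappa_\square^{\star 2}}{\epsilon^2}\sum_{0\le i\le n_z-m_2}\binom{m_1}{i}$, as claimed, with the QP case identical once $\tilde A_{QP}(\iota)$ and $\kappa_{QP}^\star$ replace their LP counterparts. I would finally double-check that the enumeration term and the $\log n$ hidden in the linear covering bound are genuinely lower order, since this is precisely where the clean statement leans on the $\lesssim$ notation.
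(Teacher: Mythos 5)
Your proposal follows essentially the same route as the paper's proof: establish via the KKT system that $\pi_\square(\cdot;\theta)$ is affine on each critical region with slope $\tilde A_\square(\iota)$ bounded by $\kappa_\square^\star$, bound the number of regions by $\sum_{0\le i\le n_z-m_2}\binom{m_1}{i}$ by enumerating active sets, invoke a dimension-free $O(\kappa_\square^{\star 2}/\epsilon^2)$ covering bound for each bounded affine class (the paper uses \cite{kakade2008complexity} where you use \cite{zhang2002covering}), and multiply. Your combination step is in fact more careful than the paper's --- the paper simply states that the per-region bounds combine, whereas you explicitly handle the $\theta$-dependent partition via a growth-function enumeration and the disjointness of the subsamples --- but this is a refinement of the same argument, not a different one.
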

The above bound implies that the complexity of the class of LPs and QPs increases by the number of inequality constraints (which agrees with our approximation results obtained so far) and also depends on the conditional numbers $\kappa_{LP}^\star$ and $\kappa_{QP}^\star$ that bound the maximum slope of affine functions among all pieces.

\subsection{Generic optimization class} 
More generally, we consider any optimization (possibly nonconvex) with a definable objective and constraints. We note that bounding the entropy numbers of the classes $C^k$ with respect to the supremum norm was were among the first results after the introduction of the concept of covering numbers (e.g., \cite{cucker2002mathematical}). However, the solution function does not belong to $C^k$ since the function is only piecewise smooth and may be even not differentiable at the boundary between two pieces; besides, each piece may be nonconvex, but existing results assume that the domain is convex. 
\begin{theorem}
\label{thm:appcovering_bound}
Consider the set $\Pi\coloneqq\{\pi(\cdot,\theta):\theta\in\Theta\}$, where $\pi$ is defined in \eqref{prob:main_prob} and  $X$ and $\Theta$ are compact. Then,
\begin{equation*}
    \label{eq:finalCoveringBoundSmooth-app}
     \log \mathcal{N}_1(\epsilon, \Pi, n)  \lesssim n\left({1}/{\epsilon} \right)^{1/k}+k^{n_x}\log\left({1}/{\epsilon} \right),
\end{equation*}
where  the constant depends on the number of strata in the $C^k$-Whitney stratification, which is always finite.
\end{theorem}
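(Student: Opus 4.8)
The plan is to reduce the global covering problem to a piecewise-smooth one via the stratification guaranteed by Theorem~\ref{prop:definability_pi}. Under Assumption~\ref{asmptn:Definable} the solution function is definable, so for the prescribed order $k$ it admits a $C^k$-Whitney stratification with a \emph{finite} number $P$ of strata, and on each stratum the map $(x,\theta)\mapsto\pi(x,\theta)$ is genuinely $C^k$. Fixing an arbitrary design $\mathcal{D}_n=\{x_1,\dots,x_n\}$, I would build the cover stratum by stratum: cover the restriction of $\Pi$ to each stratum and then combine the per-stratum approximants into a single global $\epsilon$-net, so that its log-cardinality is at most the sum over strata of the per-stratum entropies, i.e.\ at most $P$ times the largest of them. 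The finite count $P$ then enters only through the implied constant, and passing from this external net to a proper cover with $g_j\in\Pi$ costs only constant factors. Because the empirical $L_1$ metric probes only the values $\pi(x_i,\theta)$, the lower-dimensional boundary strata carry negligible empirical mass and can be absorbed into a separate coarse quantization.

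Within a single top-dimensional stratum the argument is a pure $C^k$ entropy estimate, for which I would adapt the classical supremum-norm bounds for smooth classes \cite{cucker2002mathematical}. The crucial observation is that empirical $L_1$ only sees the $n$ evaluations, so instead of approximating each $\pi(\cdot,\theta)$ uniformly over the whole $n_x$-cube I would approximate it locally by a Taylor polynomial about each active data point. The derivative bounds on the stratum make the remainder of size $\mathcal{O}(N^{-k})$ once the local neighbourhoods are refined to resolution $1/N$; requiring accuracy $\epsilon$ forces $N\sim\epsilon^{-1/k}$, and since there are at most $n$ active points this yields the first term $n\,\epsilon^{-1/k}$. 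Quantizing the $\binom{n_x+k}{n_x}=\mathcal{O}(k^{n_x})$ coefficients of the local polynomial to precision polynomial in $\epsilon$ contributes $\mathcal{O}(k^{n_x}\log(1/\epsilon))$, i.e.\ the second term. Summing the two contributions and folding the finite stratum count into the constant gives the claimed bound; notably the exponent $1/k$, rather than the naive $n_x/k$, is precisely the payoff for restricting attention to the $n$ design points.

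The hard part will be the interface between the combinatorial (which-stratum) structure and the smooth (within-stratum) structure. Unlike the $C^k$ setting underlying \cite{cucker2002mathematical}, the strata here are in general \emph{nonconvex} and $\pi$ may be genuinely discontinuous across stratum boundaries, so neither a global Taylor argument nor a convex-domain entropy bound applies off the shelf. This is exactly where the Whitney conditions of Definition~\ref{def:WhitneyStratification-main} must be invoked: the frontier condition (1) controls how strata can meet and hence caps the number of pieces the $n$ points can induce, while the normal-cone condition (2) supplies the tangential regularity needed to construct a well-behaved local polynomial extension on each curved stratum. A careful accounting must also verify that points lying in or near the lower-dimensional strata do not spoil the per-point estimate; I would argue that, after a negligible-measure correction, each $x_i$ is assigned to a unique top-dimensional stratum on which the smooth bound above holds, so that the two error sources add rather than multiply.
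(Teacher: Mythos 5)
Your overall strategy matches the paper's: use Theorem~\ref{prop:definability_pi} to get a finite $C^k$-Whitney stratification, prove a per-stratum empirical $L_1$ entropy bound for $C^k$ functions, and glue the per-stratum nets together so that the stratum count only enters the constant. However, your within-stratum argument has a genuine gap at exactly the step that produces the two-term bound. You write that the Taylor remainder forces resolution $N\sim\epsilon^{-1/k}$ and that ``since there are at most $n$ active points this yields the first term $n\,\epsilon^{-1/k}$,'' while quantizing the $\mathcal{O}(k^{n_x})$ local coefficients gives the second term. But if you quantize the Taylor coefficients independently at each of the $n$ design points, the count is $n\cdot k^{n_x}\log(1/\epsilon)$ — the two factors multiply, and you never obtain the additive form $n\epsilon^{-1/k}+k^{n_x}\log(1/\epsilon)$. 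The paper's Lemma~\ref{lem:CoveringBoundSmooth-app} avoids this by a chaining (``star'') construction: it augments the $n$ design points to a set of $m=\mathcal{O}(n\epsilon^{-1/k})$ points connected by paths with step size $\delta=\epsilon^{1/k}$, pays the full coefficient-quantization cost $k^{n_x}\log(1/\epsilon)$ only once at the root, and then shows via Taylor's theorem that the discretized derivative data at each subsequent point is determined up to $\mathcal{O}(1)$ choices by the data at a $\delta$-close predecessor, contributing only a constant per point and hence $\mathcal{O}(m)=\mathcal{O}(n\epsilon^{-1/k})$ in total. Without this propagation mechanism your two error sources do not ``add rather than multiply,'' and the claimed exponent is not reached.

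Two secondary issues. First, your plan to handle nonconvex strata by invoking the Whitney normal-cone condition to build local polynomial extensions is a harder route than the paper takes: the paper simply passes to the (possibly overlapping) convex hulls $I_j'$ of the strata, applies the convex-domain lemma there, and glues with indicators $\mathbb{I}(I_j)$; the normal-cone condition is never used in the covering argument. Second, you cannot dismiss lower-dimensional strata as carrying ``negligible empirical mass'': the empirical covering number is defined as a supremum over datasets, so an adversarial design can place all $n$ points on a boundary stratum. The fix is the paper's — give every stratum, of every dimension, its own $\epsilon$-net and multiply the cardinalities — which costs only the finite stratum count in the constant.
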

The proof exploits the result from the last section that the solution function of any optimization given by definable objective and constraints is Whitney stratifiable. This provides us with a starting point to bound the complexity, since any Whitney stratifiable function is piecewise-smooth with bounded pieces~\cite{van1996geometric}. We also proved an empirical $L_1$ covering number bound for $C^k$ functions, which can be of independent interest. Note that \cite{pontil2003note} proved that the empirical covering number is on the same order of the standard covering number for the smooth function class, in the sense that the empirical covering number can be lower bounded by the standard covering number on a larger scale (see the lower bound in \cite[Thm. 1]{pontil2003note}). However, for their lower bound to be non-vacuous, we need the size of the dataset to be on the same order as a covering set of the entire space. This only applies when the number of data scales exponentially with dimension $n_x$, which is rarely the case with practical problems. On the other hand, our result makes explicit the dependence on the number of data points $n$ and recovers the bound for the standard covering number when $n=\mathcal{O}\left(({1}/\epsilon)^{n_x}\right)$ in view of \cite{pontil2003note}. 

To summarize this section, the key implication of Theorems \ref{thm:coveringPiecewiseAffine} and \ref{thm:appcovering_bound} is that the solution function of any definable optimization is statistically learnable.

\section{Conclusion}\label{sec:conclude}
In this paper, we provide definite (but partial) answers to fundamental questions about the approximation and learning-theoretic properties of solution functions. The results provided in the paper can be used to understand questions about sample complexity and approximation capacity in the practice of decision making, and can help guide practice in various engineering domains discussed in the Introduction. Given the importance of this class of functions in the practical and theoretical arena, we expect our results to advance the understanding of algorithm design and spur further research on this problem.

\section{Acknowledgments}
The authors acknowledge the generous support by NSF, the Commonwealth Cyber Initiative (CCI), C3.ai Digital Transformation Institute, and the U.S. Department of Energy.

\bibliography{references}

\newpage
\appendix
\onecolumn

\section{Appendix}

\section{Proofs for approximation through the lens of regression \ref{sec:apprx_throu_regression}} \label{app:apprx_throu_regression}

We begin with some notations. Following \cite{balazs2015near,balazs2022adaptively}, we introduce the class of convex, bounded, subdifferentiable, and uniformly Lipschitz functions on the set ${X}$ as the following 
 \begin{align}\label{eqn:C_Balaz}
    \mathcal{C}_{{X}, B, L}\coloneqq \left\{ f:{X} \to \mathbb{R} \bigg |\ f \text{ is convex}, \| f\|_\infty \leq B,  \text{and } \|s\|_\infty \leq L,\ \forall s\in \partial  f(x)  \right\},
\end{align}
with scalars $B,L>0.$ We also introduce the class of max-affine functions that are uniformly bounded and uniformly Lipschitz with at most $K\in \mathbb{N}$ hyperplanes:
\begin{align}\label{eqn:M_Balaz}
      \mathcal{M}_{{X}, B, L}^K\coloneqq \left\{ h:{X} \to \mathbb{R} \bigg |\ h(x) = \underset{k = 1, \dots, K}{\text{max }} p_k^\top x+ q_k,\   \|p_k\|_\infty \leq L,\  h(x) \in [-B_d, B],\ \forall x\in X  \right\},
\end{align}
where $B_d\coloneqq B+n_x L$. We also denote $\mathrm{diam}(X')\coloneqq\max_{x,x'\in X'}\|x-x'\|_\infty$ as the diameter of the set $X$. For example, $\mathrm{diam}(X)\coloneqq 1$ by the assumption that $X\coloneqq [0,1]^{n_x}$. We start with a simple observation with nontrivial implications.

\begin{lemma}
\label{lem:max_affine_reconstruct}
Any function $h\in \mathcal{M}_{{X}, B, L}^K$ can be written as a solution function of some LP with $K$ constraints and $n_x+1$ variables. In addition, any function of the form $f=h_1-h_2$, where $h_1,h_2\in \mathcal{M}_{{X}, B, L}^K$, can be written as a solution function of some LP with $2K+1$ constraints and $n_x+3$ variables.
\end{lemma}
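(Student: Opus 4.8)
The plan is to produce an explicit LP in each case and verify, by inspecting the Karush–Kuhn–Tucker conditions as the paper does for its product-operator examples, that the (unique) minimizer equals the desired value. The whole argument rests on the epigraph/hypograph reformulation of pointwise maxima and minima of affine functions. For the first claim, write $h(x)=\max_{k\in[K]} p_k^\top x+q_k$ and introduce an epigraph variable $t$, taking
\[
\pi(x)=\argmin_{t}\; t \quad\text{s.t.}\quad t\geq p_k^\top x+q_k,\quad k\in[K].
\]
The smallest $t$ feasible for all $K$ constraints is exactly $\max_k(p_k^\top x+q_k)=h(x)$, so the minimizer is unique and equals $h(x)$; the essential construction therefore uses one variable and $K$ inequality constraints. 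To reach the stated $n_x+1$ variables I would carry the input $x$ alongside the value (so the block composes cleanly as a layer in the network of Sec.~\ref{sec:role_depth}); reconciling this with the claimed tally of $K$ constraints is routine bookkeeping that does not change the active/defining constraints.

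The interesting case is $f=h_1-h_2$, which is in general nonconvex and hence cannot be the optimal \emph{value} of a convex program; the key realization is that the solution function (the argmin), unlike the optimal value, need not be convex in $x$. I would exploit $-h_2(x)=-\max_j(r_j^\top x+q'_j)=\min_j\big(-(r_j^\top x+q'_j)\big)$ to write $f$ as the sum of a pointwise max (handled by an epigraph variable that is minimized) and a pointwise min (handled by a hypograph variable that is maximized). Concretely, with $t_1$ for $h_1$, $u$ for $-h_2$, and a read-out variable $w$, take
\[
\pi(x)=\argmin_{t_1,\,u,\,w}\;(t_1-u)\quad\text{s.t.}\quad t_1\geq p_k^\top x+q_k,\; u\leq -(r_j^\top x+q'_j),\; w=t_1+u.
\]
Because the objective separates into $t_1$ (constrained only by the first family) and $-u$ (constrained only by the second), the minimization decouples: at optimality $t_1^\star=h_1(x)$ and $u^\star=-h_2(x)$, whence the read-out coordinate $w^\star=t_1^\star+u^\star=h_1(x)-h_2(x)=f(x)$. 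This uses $K+K$ inequalities and one equality, i.e.\ $2K+1$ constraints, and three essential variables, augmented to $n_x+3$ exactly as in the first case.

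To finish, I would verify each construction by checking primal feasibility, tightness of the binding constraints, and uniqueness of the minimizer (the scalar epigraph/hypograph values are pinned, and $w$ is determined by its defining equality), and I would confirm DPP compliance: treating $p_k,q_k,r_j,q'_j$ as fixed data, every constraint is affine in the variables and affine in the parameter $x$, so all expressions are DPP-admissible and the layers remain differentiable for backpropagation.

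The main obstacle is conceptual rather than computational: recognizing that a nonconvex difference-of-convex function can be realized \emph{exactly} as an LP solution map by simultaneously minimizing an epigraph variable and maximizing a hypograph variable inside a single program, and then checking that the two halves do not interfere — the objective is separable across the two blocks, the only coupling being the inert read-out equality $w=t_1+u$. Everything else reduces to the standard epigraph identity for max-affine functions and routine counting of variables and constraints.
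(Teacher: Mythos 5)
Your proposal is correct and follows essentially the same route as the paper: the epigraph reformulation with one auxiliary variable and $K$ inequalities for a single max-affine function, and for $f=h_1-h_2$ two epigraph-type variables plus a read-out variable and one coupling constraint, with identical counts of $2K+1$ constraints and $n_x+3$ variables. If anything, your treatment of the second part is more careful than the paper's terse description (which says to minimize $t_3$ subject to $t_1+t_2\leq t_3$ with $K$ constraints for each $h_i$ and leaves the sign handling for the concave part $-h_2$ implicit); your separable objective $t_1-u$ with hypograph constraints on $u$ and the inert equality $w=t_1+u$ makes the decoupling explicit and directly verifiable.
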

\begin{proof}
Suppose $h(x) = \underset{k = 1, \dots, K}{\text{max }} p_k^\top x+ q_k$. It can be seen that by introducing an additional variable $t$ and $K$ constraints in the form of $p_k^\top x+ q_k\leq t$ and changing the maximum to minimum, we have constructed an equivalent optimization with solution equal to $h(x)$. This is known as the epigraph reformulation of an optimization. The construction for $f=h_1-h_2$ is performed by introducing $t_i$ and $K$ constraints for each $h_i$, $i=1,2$, plus an additional variable $t_3$ and an additional constraint $t_1+t_2\leq t_3$, with the objective to minimizes over $t_3$.
\end{proof}

\subsection{Proof of Theorem \ref{thm:approx_balaz}}\label{subsec:approx_balaz}


\begin{proof}

By \cite{bavcak2011difference}, any function $f\in C^2$ can be written as a DC function: $f = \phi_1 - \phi_2$, where $\phi_i\in C_{{X},B,L}$  for some $B$ and $L$. The following proof extended the algorithm of \cite{bronshteyn1975approximation,balazs2015near}, which is designed for convex max-affine functions, to the case of piecewise affine functions (not necessarily convex). For any $x \in X$ and convex function $\phi$, let $\nabla \phi(x)\in\partial \phi(x)$ be an arbitrary fixed subgradient of $\phi$ at $x$. For any $t>0$ and $i=1,2$, define $R_t\coloneqq 1+2tL$, $\nu_i(x)\coloneqq x+t\nabla \phi_i(x)$  that combines the point $x$ and $\nabla \phi_i(x)$ weighted by $t$, and  $\mathcal{K}_i\coloneqq\{\nu_i(x):x\in X\}\subset \mathbb{R}^{n_x}$ as an expanded set of $X$ along the direction $\nabla \phi_i$. Note that since the subgradient of a convex function is monotone, $\nu_i(\cdot)$ is strictly monotone, so $\nu_i(x)\neq \nu_i(y)$ for any $x\neq y$. This also implies that $\nu_i(\cdot)$ is a bijection and its inversion is well-defined. Let $\mathcal{K}_{\epsilon,i}\subseteq \mathcal{K}_i$ be an $\sqrt{\epsilon}$-net of set $\mathcal{K}_i$ with respect to Euclidean norm $\|\cdot\| $, and ${X}_{\epsilon,i} \triangleq \left\{  \nu_i^{-1}(z) \in {X} : z\in \mathcal{K}_{\epsilon,i} \right\}$ be its preimage corresponding to the mapping $\nu_i$ for $i=1,2$. Since $R_t\geq \mathrm{diam}(\mathcal{K}_i)$, by standard covering number argument \cite{wainwright2019high} and the fact that $\|x\|\leq \sqrt{n_x}\|x\|_\infty$ for any $x$, $|\mathcal{K}_{\epsilon,i}|=|X_{\epsilon,i}|\leq \left( 9 n_x R_t^2 /\epsilon \right)^{n_x/2}$ for all $\epsilon\in(0,9 n_x R_t^2]$. Note that since $\mathcal{K}_{\epsilon,i}$ is an $\sqrt{\epsilon}$-net of set $\mathcal{K}_i$, by definition, for any $x\in X$, there exists $\hat{x}_i\in X_{\epsilon,i}$ such that $\|\nu_i(x)-\nu_i(\hat{x}_i)\|\leq\sqrt{\epsilon}$. Hence,
\begin{align*}
    &\|x-\hat{x}_i\|^2+t^2\|\nabla\phi_i(x)-\nabla\phi_i(\hat{x}_i)\|^2\\
    &\leq \|x-\hat{x}_i\|^2+2t(x-\hat{x}_i)^\top (\nabla\phi_i(x)-\nabla\phi_i(\hat{x}_i)) +t^2\|\nabla\phi_i(x)-\nabla\phi_i(\hat{x}_i)\|^2\\
    &=\|\nu_i(x)-\nu_i(\hat{x}_i)\|^2\\
    &\leq{\epsilon},
\end{align*}
where the first inequality is due to the convexity of $\phi_i$. This implies that for any $x\in X$, there exists $\hat{x}_i\in X_{\epsilon,i}$ such that $\|x-\hat{x}_i\|$ is controlled by $\sqrt{\epsilon}$ and $\|\nabla\phi_i(x)-\nabla\phi_i(\hat{x}_i)\|$ is bounded by $\sqrt{\epsilon}/t$
.

Now, consider $K \coloneqq \left( 18 n_x R_t^2 /\epsilon \right)^{n_x/2}$ and  set ${X}_{K,i}\triangleq\left\{ \hat x_{1}^{(i)}, \dots, \hat x_{K}^{(i)} \right\} \subseteq {X}$ such that ${X}_{\epsilon/2,i} \subseteq{X}_{K,i}$. 
Then, we introduce the following piecewise affine function $h:X\to \mathbb{R}$
\begin{align*}
    h(x) =& \max_{k = 1, \dots, K}\left\{ \phi_1(\hat x_k^{(1)}) + \nabla \phi_1(\hat x_k^{(1)})^\top (x-\hat x_k^{(1)}) \right\} \\&\quad- \max_{k = 1, \dots, K}\left\{ \phi_2(\hat x_k^{(2)}) + \nabla \phi_2(\hat x_k^{(2)})^\top (x-\hat x_k^{(2)}) \right\}.
\end{align*}
Hence, for any $x\in X,$ we have that
\begin{align*}
  |f(x) -  h(x)| &\leq  \left|\phi_1(x)-\max_{k = 1, \dots, K}\left\{ \phi_1(\hat x_k^{(1)}) + \nabla \phi_1(\hat x_k^{(1)})^\top (x-\hat x_k^{(1)}) \right\}\right| \\&\quad+\left|\phi_2(x) - \max_{k = 1, \dots, K}\left\{ \phi_2(\hat x_k^{(2)}) + \nabla \phi_2(\hat x_k^{(2)})^\top (x-\hat x_k^{(2)}) \right\}\right|\\
  &=\sum_{i=1,2}\phi_i(x)-\max_{k = 1, \dots, K}\left\{ \phi_i(\hat x_k^{(i)}) + \nabla \phi_i(\hat x_k^{(i)})^\top (x-\hat x_k^{(i)}) \right\},
\end{align*}
where the last equality is because the function $\max_{k = 1, \dots, K}\left\{ \phi_i(\hat x_k^{(i)}) + \nabla \phi_i(\hat x_k^{(i)})^\top (x-\hat x_k^{(i)})\right\}$ is a uniform lower bound of $\phi_i$ by convexity. Let us define the selective function
\begin{align*}\label{eqn:val_k_i}
    k_i(x) = \underset{k= 1, \dots, K}{\text{argmin}}\left\| \nu_i^{-1}(x) - \nu_i^{-1}(\hat x_k^{(i)}) \right\|,
\end{align*}
which selects the index of the point in $X_{K,i}$ such that $x_k^{(i)}$ is closest to $x$ as measured by $\nu_i^{-1}$. Since $X_{K,i}$ is the preimage of $\mathcal{K}_{\epsilon/2,i}$, which is, by definition, an $\epsilon/2$-cover of  $\mathcal{K}_{i}$, 
\begin{align*}
  |f(x) -  h(x)|&\leq \sum_{i=1,2}\phi_i(x)- \phi_i(\hat x_{k_i(x)}^{(i)}) + \nabla \phi_i(\hat x_{k_i(x)}^{(i)})^\top (x-\hat x_{k_i(x)}^{(i)}) \\
  &\leq \sum_{i=1,2} {\|\nabla  \phi_i(x) - \nabla  \phi_i(\hat x_{k_i}^{(i)})\|} {\|x-\hat x_{k_i}^{(i)}\|}\\
  &\leq \frac{\epsilon}{t}, 
\end{align*}
where the first inequality is by plugging $k_i(x)$ into the maximum operator, the second inequality is due to Cauchy-Schwarz, and the last inequality follows from the fact that  $\|x-\hat{x}_i\|$ is controlled by $\sqrt{\epsilon/2}$ and $\|\nabla\phi_i(x)-\nabla\phi_i(\hat{x}_i)\|$ is bounded by $\sqrt{\epsilon/2}/t$
by the aforementioned reasoning. Therefore, we have shown that $h$ can uniformly approximate $f$ by accuracy $\epsilon/t$. 

From $K \coloneqq \left( 18 n_x R_t^2 /\epsilon \right)^{n_x/2}$, we have $\epsilon = 18 n_x R_t^2K^{-2/n_x}$. Therefore, $$\|f - h\|_\infty\leq \frac{\epsilon}{t} = \frac{18n_xR_t^2}{t}K^{-2/n_x}.$$
Optimizing over $t$ optimal, we obtain $t^* = \frac{1}{2L}$. Therefore, by choosing  $K^* = \left(\frac{\epsilon}{144 n_x L }\right)^{\frac{-n_x}{2}}$, we have that $\|f - h\|_\infty\leq \epsilon$. The proof is concluded by recalling Lemma \ref{lem:max_affine_reconstruct}.

\end{proof}

\section{Proof in Section ``the role of depth''}\label{app:role_depth}

Before we prove the main theorem, we first establish an intermediate result regarding the exact reconstruction of polynomial functions.

\begin{lemma}\label{lem:reconstruct_poly_multilayer}
Let $P_\mathbf{m}(x)\coloneqq(x-\frac{2\mathbf{m}+1}{2N})^{\mathbf{n}}=\prod_{i=1}^{n_x} (x_i-\frac{2m_i+1}{2N})^{n_i}$ be a polynomial of order $k\geq 2$, where  $\mathbf{m}\coloneqq (m_1,...,m_{n_x})\in\{0,1,...,N-1\}^{n_x}$ and $\mathbf{n}\in\{0,1,...,k\}^{n_x}$ such that $|\mathbf{n}|=\sum_i n_i\leq k$. Suppose that $x_i\neq \frac{2m_i+1}{2N}$ for all $i\in[n_x]$. Then, we can reconstruct $P_\mathbf{m}(x)$ by  a chain graph with depth of at most $2(k-1)$, where each node is an LP with at most 1 variable and 2 constraints.
\end{lemma}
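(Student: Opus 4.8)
The plan is to build $P_\mathbf{m}(x)=\prod_{i=1}^{n_x}(x_i-c_i)^{n_i}$, where I write $c_i:=\frac{2m_i+1}{2N}$ and $y_i:=x_i-c_i$, by multiplying the $|\mathbf{n}|=\sum_i n_i\le k$ factors (counted with multiplicity) together one at a time along a chain. The atomic operation is the exact product of two scalars realized by the two-layer product construction described earlier: a first LP node computes the reciprocal $1/a$ of its incoming value $a$, and a second LP node divides a second incoming value $b$ by $1/a$ to return $ab$. Each factor $y_i$ is an affine function of the input, so it can be delivered to the appropriate node through an affine edge transformation $h_e$ without consuming a layer; the running product is passed node-to-node, which gives the chain structure.

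The first obstacle is that the gadget as originally stated requires its arguments to lie in $(0,1]$, whereas the factors $y_i=x_i-c_i$ may be negative (the hypothesis only guarantees $y_i\ne 0$). I would remove this restriction by replacing the single inequality $az\le b$ with the equality $az=b$, written as the two inequalities $az\le b$ and $az\ge b$ (this is where the budget of two constraints per node is spent). For any $a\ne 0$ the feasible set $\{z:az=b\}$ is the single point $\{b/a\}$, so the solution function is well-defined and equals exactly $b/a$ regardless of the sign of $a$; composing the reciprocal node $\{az=1\}\mapsto 1/a$ with the division node $\{(1/a)z=b\}\mapsto ab$ then yields the signed product $ab$ for every nonzero $a$ and arbitrary $b$. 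Each such node uses a single variable and two constraints, and both constraints are DPP-compliant because $az$ is the product of a parameter-affine coefficient with the parameter-free variable $z$.

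With the signed product in hand, I would chain the multiplications: set $p_1:=y_1$ through an affine input edge, and for $j=2,\dots,|\mathbf{n}|$ compute $p_j:=p_{j-1}\,y_j$ using one two-layer gadget, whose first layer receives $p_{j-1}$ from the previous node and whose second layer receives the fresh factor $y_j$ from an input edge. Because every $y_i\ne 0$, each running product $p_j=\prod_{\ell\le j}y_\ell$ is nonzero and finite, so every reciprocal step is well-defined and the construction never meets a degenerate LP; the final node outputs $p_{|\mathbf{n}|}=P_\mathbf{m}(x)$ with no error. The depth is exactly two layers per multiplication, i.e.\ $2(|\mathbf{n}|-1)\le 2(k-1)$, and each node is an LP with one variable and two constraints, matching the claim.

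I expect the sign issue to be the only genuine obstacle; once it is dispatched by the equality reformulation, the remaining bookkeeping (non-degeneracy from $x_i\ne c_i$, the affine delivery of each factor, DPP-compliance, and the layer count) is routine. A minor point to confirm is that counting factors with multiplicity gives $|\mathbf{n}|$ of them, so that a repeated factor such as $(x_i-c_i)^{n_i}$ is handled simply by multiplying $y_i$ by itself $n_i-1$ additional times.
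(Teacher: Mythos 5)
Your proposal is correct and follows essentially the same route as the paper's proof: the equality reformulation $(x_1-a_1)z\leq x_2-a_2$ and $(x_1-a_1)z\geq x_2-a_2$ (two constraints, one variable) to handle signed inputs, the two-layer reciprocal-then-divide gadget producing the exact product $(x_1-a_1)(x_2-a_2)$, and repeated composition along a chain to accumulate all $|\mathbf{n}|\leq k$ factors. Your explicit depth count $2(|\mathbf{n}|-1)\leq 2(k-1)$ and the DPP-compliance check are in fact slightly more detailed than the paper's own write-up.
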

\begin{proof}
 The proof is inspired by the observation that we can exactly reconstruct a product function by a composition of LPs. In particular, we deal with the case that the input can be negative or translated with some minor modifications to the reconstruction. First, the solution to $\{\min_{z} -z \;\;\text{s.t.} \;\;(x_1-a_1)z\leq x_2-a_2,\;\;(x_1-a_1)z\geq x_2-a_2 \}$, where $x_1\neq a_1$ is exactly $(x_2-a_2)/(x_1-a_1)$. Second, a two-layer architecture, with $\{\min_{z} -z \;\;\text{s.t.} \;\;(x_1-a_1)z\leq 1,\;\;(x_1-a_1)z\geq 1 \}$ as the first layer, the output of which is provided as input to $\{\min_{z} -z \;\;\text{s.t.} \;\;\square z\leq x_2-a_2,\;\;\square z\geq x_2-a_2 \}$ as the value for parameter $\square$, has an output that is $(x_1-a_1)(x_2-a_2)$ for any $x_1\neq a_1$. Thus, the result follows by repeated composition of product functions.
\end{proof}

\subsection{Proof of Theorem \ref{thm:multilayer_n}}\label{subsec:multilayer_n}

\begin{proof}
The proof follows the idea of \cite[Thm. 1]{yarotsky2017error} that performs local Taylor approximation for each cell of a partition of controlled resolution. We will focus mainly on the deviation points.

Consider a partition of the space $[0,1]^{n_x}$ into $N^{n_x} $ cells of equal size. Denote 
$$I_\mathbf{m}(x) = \begin{cases}
  1 \ \text{if } \left| x_i - \frac{2m_{i}+1}{2N} \right|\leq \frac{1}{2N},\ \forall i=1,...,n_x\\ 0 \ \text{ otherwise } 
\end{cases}$$
as the indicator function of the cell indexed by $\mathbf{m}$, where $\mathbf{m} = (m_1,\dots, m_{n_x})\in \{0,1, \dots, N-1\}^{n_x}$. Also, let 
$$h_\mathbf{m}(x) = \sum_{\mathbf{n}:|\mathbf{n}|\leq k}\frac{D^{\mathbf{n}} f}{\mathbf{n}!}\bigg\rvert_{x=\frac{2\mathbf{m}+1}{2N}}\left(x-\frac{2\mathbf{m}+1}{2N}\right)^{\mathbf{n}}$$ 
be the Taylor approximation of $f$ within the cell indexed by $\mathbf{m}$ or order $\mathbf{n}$, where the point of approximation is selected at the center of the cell. Here, $\mathbf{n}!=\prod_{i=1}^{n_x} n_i!$ as usual. Let $\hat{f}(x)=\sum_{\mathbf{m}}I_\mathbf{m}(x)h_\mathbf{m}(x)$, which pieces together local approximations for each cell of the partition. Then the approximation error can be bounded by the standard argument for Taylor's expansion:
\begin{align}
    \left|f(x) - \hat f(x)\right| &= \left|   \sum_m I_\mathbf{m}(x)\left(f(x) - h_\mathbf{m}(x)\right) \right|\\
    & \leq \underset{\mathbf{m}:\left| x_i - \frac{{m}_{i+1}}{2N} \right|< \frac{1}{2N}\ \forall i}{\text{max } } \left| f(x) -h_\mathbf{m}(x) \right|\\
    & \leq \frac{n_x^k}{k!}\left(\frac{1}{N}\right)^k \underset{\mathbf{n}:|\mathbf{n}|=k}{\text{max }} \underset{x\in [0,1]^{n_x}}{\text{ess sup}} \left| D^\mathbf{n} f (x) \right|\label{eq:improve_yaroski}\\
    &  \leq \frac{n_x^k}{k!}\left(\frac{1}{N}\right)^k, 
\end{align}
where the first equality is due to $f(x)= \sum_m I_\mathbf{m}(x)f(x)$, the first inequality is because there is no overlapping among the supports of $I_\mathbf{m}$, the second inequality is a standard Taylor approximation bound,  the last inequality is due to the model class assumption. Substituting $N = \lceil n_x (\frac{1}{k!\epsilon})^{1/k} \rceil$ gives the error of $\epsilon$. Note that the second step \eqref{eq:improve_yaroski}, a reduction at the order of $2^{n_x/k}$ compared to \cite[Thm. 1]{yarotsky2017error} is achieved due to the exact partitioning of the space with the bump function.

What is left of the proof is to reconstruct the function $\sum_\mathbf{m} I_\mathbf{m}(x)h_\mathbf{m}(x)$. To this end, we first recognize that 
\begin{equation*}
\begin{aligned}
I_\mathbf{m}(x) =\underset{z\in[0,1]}{\arg\min } \quad &- z\\
\textrm{s.t.} \quad & \left( x_i - \frac{2m_i+1}{2N} - \frac{1}{2N} \right)z \leq 0 \quad \forall i=1,...,n_x\\
  &\left( x_i - \frac{2m_i+1}{2N} + \frac{1}{2N} \right)z \geq 0 \quad \forall i=1,...,n_x ,
\end{aligned}
\end{equation*}
which is a bump function in $n_x$-dimensional space. Notice that the above optimization has $2n_x +2$ constraints and 1 variable. 

Also, note that $h_\mathbf{m}(x)$ can be reconstructed by first reconstructing each polynomial separately, which requires a chain with at most $2(k-1)$ depth, where each node is an LP with at most 1 variable and 2 constraints. Then, we take a weighted sum over all polynomials, which is simply an affine transformation on the outputs of each chain graph, so no additional layers are needed. Thus, in total, we can have a graph with a depth of at most $2(k-1)$, where each layer has at most $\begin{pmatrix}n_x\\k+n_x\end{pmatrix}$ LPs, each with at most 1 variable and 2 constraints.

Finally, to reconstruct $\hat f(x)$, we can first multiply the outputs of $I_\mathbf{m}(x)$ and $h_\mathbf{m}(x)$, which requires an additional 2 LP layers. Also note that the width of the first layer is also increased by $2n_x+2$ constraints and 1 variable due to the need for $I_\mathbf{m}$. Since there are in total $N^{n_x}$ possible choices of $\mathbf{m}$, to implement the product, we need $N^{n_x}$ 2-layered LPs, each with 3 constraints and 1 variable, whose outputs are then summed up in the last layer.

Summing up, we have constructed a network with a depth of $2k$, where the first layer has $\bigg( 2n_x+2+2\begin{pmatrix}n_x\\k+n_x\end{pmatrix}\bigg)N^{n_x}$ constraints and $\bigg(1+\begin{pmatrix}n_x\\k+n_x\end{pmatrix}\bigg)N^{n_x}$ variables, and the last layer has $3N^{n_x}$ constraints and $N^{n_x}$ variables. Observe that the widest layer is the first layer. The result follows by plugging in $N = \lceil n_x (\frac{1}{k!\epsilon})^{1/k} \rceil$ in the above.

\end{proof}

\begin{remark}
By comparing with the order given in \cite[Thm. 1]{yarotsky2018optimal} for the case of DNN, we can see that our construction has a fixed depth that does not grow with the accuracy requirement $\epsilon$, and also a width that is reduced by the order of $2^{n_x^2/k}$, which can be substantial for high-dimensional problems.
\end{remark}

\begin{remark}
To stress on the exact reconstruction, we provide an example of how  we perform  efficient reconstruction of $\underbrace{\prod_{k=1}^2\psi(4Nx_k)}_{\phi_0(\mathbf{x})}x_1^3x_2$, visualized in Figure \ref{fig:talor_polynomial}. 
\begin{figure}[]
    \centering
    \includegraphics[width=0.8\linewidth]{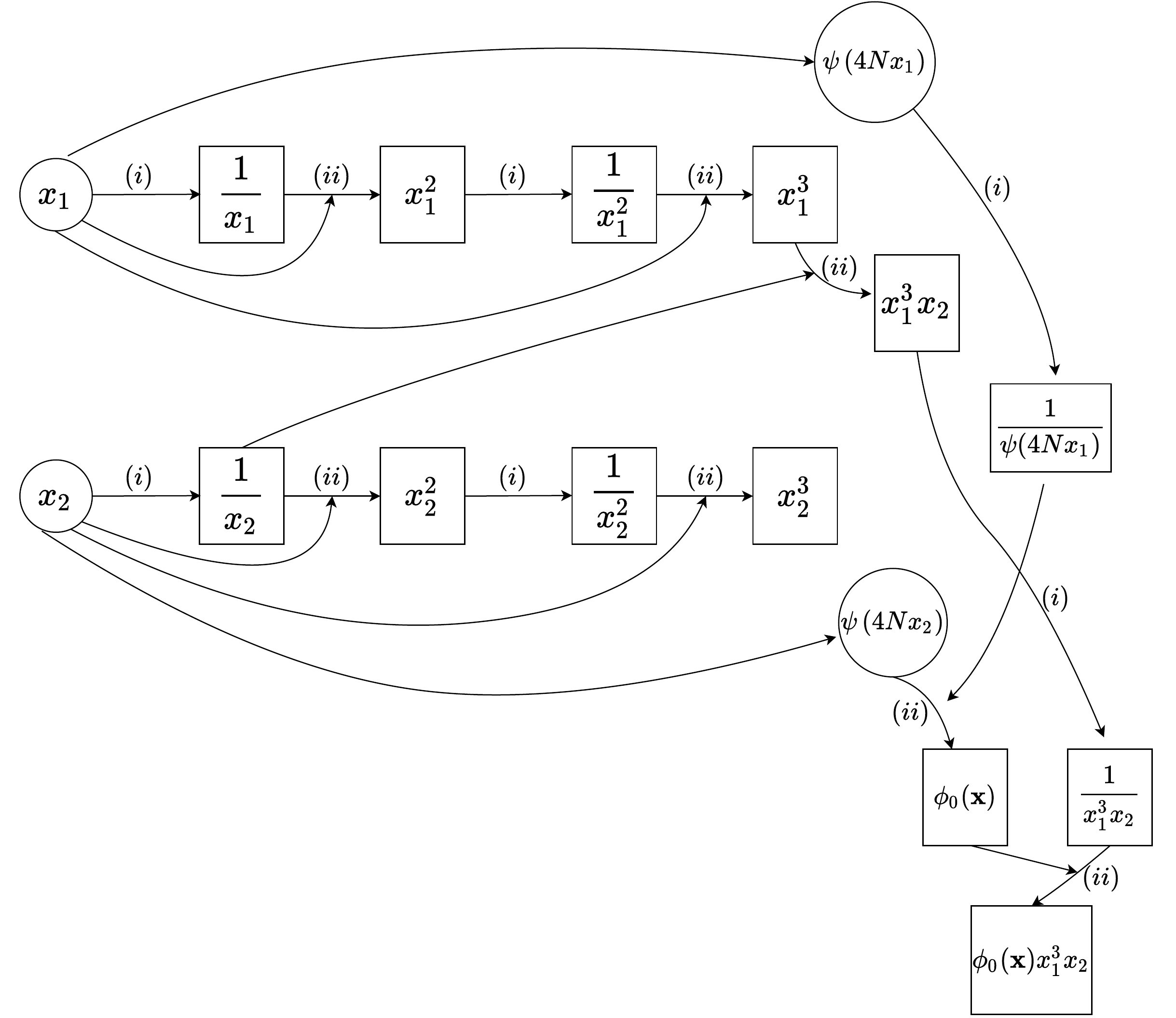}    \caption{\label{fig:talor_polynomial}\footnotesize Reconstruction of Taylors' polynomial, where we use $``(i)"$ to denote an optimization to obtain the inverse of  the input, further $``(ii)"$ denotes an optimization to obtain the product of two inputs. }
\end{figure}

\end{remark}

\section{Proof in Section ``Definability and whitney stratification''} \label{app:definability}

For the sake of completeness, let us recall some fundamental concepts/results in tame geometry, which allow us to study the global geometry of the solution function of a wide range of optimization problems. More information can be found in \citep{van1996geometric,ioffe2009invitation,davis2020stochastic}.

\begin{definition}[Whitney Stratification]
\label{def:WhitneyStratification}
 A Whitney $C^k$ stratification of a set $I$ is a partition of $I$ into finitely many nonempty $C^k$ manifolds, called strata, satisfying the following compatibility conditions:

\begin{enumerate}
    \item For any two strata $I_a$ and $I_b$, the implication $I_a \cap I_b \neq \emptyset$ implies that $I_a \subset \mathrm{cl} I_b$ holds, where $\mathrm{cl} I_b$ denotes the closure of the set $I_b$. 
    
    \item For any sequence of points $x_k$ in a stratum $I_a$, converging to a point $x^{\star}$ in a stratum $I_b$, if the corresponding normal vectors $v_k \in N_{I_a}(x_k)$ converge to a vector $v$, then the inclusion $v \in N_{I_b}(x^{\star})$ holds. Here $N_{I_a}(x_k)$ denotes the normal cone to $I_a$ at $x_k$.
\end{enumerate}

\end{definition}

Roughly speaking, stratification is a locally finite partition of a given set into differentiable manifolds, which fit together in a regular manner (property $1$ in Def. \ref{def:WhitneyStratification}). Whitney stratification as defined above is a special type of stratification for which the strata are such that their tangent spaces (as viewed from normal cones) also fit regularly (property $2$).

There are several ways to verify Whitney stratifiability. For example, one can show that the function under study belongs to one of the well-known function classes, such as semialgebraic functions \citep{van1996geometric}, whose members are known to be Whitney stratifiable. However, to study the solution function of a general convex optimization problem, we need a far-reaching axiomatic extension of semialgebraic sets to classes of functions definable on ``o-minimal structures,'' which are very general classes and share several attractive analytic features as semialgebraic sets, including Whitney stratifiability \citep{davis2020stochastic,van1996geometric}.

\begin{definition}[o-minimal structure]
\label{def:0-minimal structure}
 \citep{van1996geometric} An o-minimal structure is defined as a sequence of Boolean algebras $O_v$ of subsets of $\mathbb{R}^{v}$, such that for each $n_v \in \mathbb{N}$, the following properties hold:

\begin{enumerate}
    \item If some set $X$ belongs to $O_v$, then $X \times \mathbb{R}$ belong to $O_{v+1}$.
    
    \item Let $P_{proj}: \mathbb{R}^{v} \times \mathbb{R} \rightarrow \mathbb{R}^{v}$ denote the coordinate projection operator onto $\mathbb{R}^{v}$, then for any $X$ in $O_{v+1}$, the set $P_{proj}(X)$ belongs to $O_v$.
    
    \item $O_v$ contains all sets of the form $\{x \in \mathbb{R}^{v}: \hspace{0.1cm} y(x) = 0 \}$, where $y(x)$ is a polynomial in $\mathbb{R}^{v}$.
    
    \item The elements of $O_1$ are exactly the finite unions of intervals (possibly infinite) and points.
\end{enumerate}
Then all the sets that belong to $O_v$ are called definable in the o-minimal structure.

\end{definition}
Definable sets have broader applicability than semialgebraic sets (in the sense that the latter is a special kind of definable sets) but enjoy the same remarkable stability property: the composition of definable mappings (including sum, inf-convolution, and several other classical operations of analysis involving a finite number of definable objects) in some o-minimal structure remains in the same structure.

\subsection{Proof of Theorem \ref{prop:definability_pi}}

\begin{proof}
Let $g_R(z;x,\theta)\coloneqq g(z;x,\theta)+\mathbb{I}_{R(x,\theta)}$ be the penalized formulation of the optimization problem in \ref{prob:main_prob}. By Assumption \ref{asmptn:Definable}, the definability of the indicator function, and the fact that definability is preserved under addition and composition, which is due to the definable counterpart of the Tarski-Seidenberg theorem \cite{van1996geometric}, $g_R(z;x,\theta)$ is definable on the same o-minimal structure.

Let $g^*(x,\theta) \coloneqq \min_z\left\{ g(z; x, \theta) \ |\ \text{subject to } z\in R(x, \theta)\right\}=\min_z g_R(z;x,\theta)$ be the optimal value function. Since definability is preserved under $\inf$ projection, $g^*(x,\theta)$ is definable. Also, recognize that $\pi(x,\theta) = \{z:g_R(z;x,\theta) = g^*(x,\theta)\}$, by definition, $\pi(x,\theta)$ is definable on the same o-minimal structure. Since the output of any function in $\Upsilon^{W^v,W^c,L}$ is given by finite operations of affine transformation and the composition of definable functions (as stipulated by network construction), it is definable by the same reasoning as above.
\end{proof}

\section{Proof in Section ``Covering number bounds''}\label{app:statistical_bnds}

We start with some basic definitions (see, e.g., \cite{van1996weak,mohri2018foundations,wainwright2019high}).

\begin{definition}[Covering number and entropy]
\label{def:coveringNumbers} For a given metric  $d$ and $\epsilon >0$, the covering number $\mathcal{N}(\epsilon,\mathcal{F},d)$ is the minimal number of balls $\mathbb{B}_{\epsilon}(f)\coloneqq \{g\in\mathcal{F}:d(g,f)\leq\epsilon\}$ of radius $\epsilon$ and center $f\in\mathcal{F}$ needed to cover
the set $\mathcal{F}$, i.e., $\mathcal{N}(\epsilon,\mathcal{F},d) = \min \{n \mid \mathcal{F} \subseteq \cup_{i=1}^n \mathbb{B}_{\epsilon}(f_i), \;\; \text{for some} \; f_i \in \mathcal{F}\}.$  The entropy is the logarithm of the covering number.
\end{definition}
 
It is often the case that tighter bounds are possible by measuring complexity in a data-dependent manner. We can define the empirical versions of the above notions by defining the empirical metric with respect to a dataset $\mathcal{D}_n\coloneqq\{x_1,...,x_n\}$. In particular, we define the set $\mathcal{F}(\mathcal{D}_n) \in\mathbb{R}^n$ as follows:
$$\mathcal{F}(\mathcal{D}_n)\coloneqq\{(f(x_1),...,f(x_n))|f\in\mathcal{F}\},$$
along with a distance measured in terms of the empirical $\ell_1$-norm: $$\|f-g\|_{\mathcal{D}_n}\coloneqq\frac{1}{n}\sum_{i=1}^n|f(x_i)-g(x_i)|.$$
Using the above empirical $\ell_1$-norm as the metric and taking the supremum over all possible datasets $\mathcal{D}_n$ will lead to the definitions of empirical $L_1$ covering number. For simplicity of notation, we will use $\mathcal{N}_1(\epsilon,\mathcal{F},n)$, where $n$ is the number of data points.

\subsection{Proof of Theorem \ref{thm:coveringPiecewiseAffine}}\label{subsec:coveringPiecewiseAffine}

For convenience, we recall the setup here. Consider the function 
\begin{equation*}
    \pi_{QP}(x,\theta) \coloneqq {\arg\min}_{z\in R(x,\theta)} \left(\frac{1}{2}A_0 z+U_0^x x+U_0^\theta \theta +b_0\right)^\top z,
\end{equation*}
with $$R(x,\theta)\coloneqq\{z: A_{1}z\leq b_{1}+U_{1}^xx+U_{1}^\theta\theta,A_{2}z = b_{2} + U^x_{2} x+U^\theta_{2} \theta\},
$$ where $x$ and $\theta$ are the input and parameter, respectively, $z\in\mathbb{R}^{n_z}$ is the optimization variable, and all the rest are fixed hyperparameters of compatible dimensions; in particular, $A_0\succ 0$ is positive definite, and let $m_{1}$ and $m_{2}$ be the number of inequality and equality constraints, respectively. We can also define $\pi_{LP}(x,\theta)$ by setting $A_0= 0$.  Define $\Pi_\square=\{\pi_\square(\cdot,\theta):\theta\in\Theta\}$, where $\square$ can be LP or QP, and $\Theta$ is compact. 
    
Next, we develop some elementary results for our settings. Note that similar derivations can be also found in the multi-parametric literature (see, e.g., \cite{grancharova2012explicit,bemporad2015explicit,pistikopoulos2020multi}). We restate the lemmas for easy reference.
\begin{lemma}\label{lem:mp-qp}
    Consider QP \eqref{equ:qp} with $A_0\succ 0$. Denote $\mathcal{I}^*(x,\theta)$ be the set of active inequality constraints for any given pair of $(x,\theta)$ such that the corresponding inequalities hold with equality: $$[A_1]_{\mathcal{I}^*(x,\theta)}\pi_{QP}(x,\theta)=[b_1]_{\mathcal{I}^*(x,\theta)}+[U_1^x]_{\mathcal{I}^*(x,\theta)} x+[U_1^\theta]_{\mathcal{I}^*(x,\theta)}\theta$$ 
    Let
    \begin{equation*}
        \tilde{A}_{QP}(\mathcal{I}^*)=A_0^{-1}[A_1]_{\mathcal{I}^*}^\top\left([A_1]_{\mathcal{I}^*}A_0^{-1}[A_1]_{\mathcal{I}^*}^\top\right)^{-1}\left([A_1]_{\mathcal{I}^*}A_0^{-1}U_0^x+[U_1^x]_{\mathcal{I}^*}\right)-A_0^{-1}[U_0^x]_{\mathcal{I}^*},
    \end{equation*}
    where we used the shorthand $\mathcal{I}^*$ for $\mathcal{I}^*(x,\theta)$. Then, the solution function can be written in the form:
    \begin{equation*}
        \pi_{QP}(x,\theta)=\tilde{A}_{QP}(\mathcal{I}^*)x+\tilde{b}_{QP}(\theta,\mathcal{I}^*),
    \end{equation*}
    for some (herein unspecified) bias function $\tilde{b}_{QP}(\theta,\mathcal{I}^*)$ that no longer  depends on $x$ if $\mathcal{I}^*(x,\theta)$ is given.
\end{lemma}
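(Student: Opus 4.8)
The plan is to derive the explicit form of the QP solution via the Karush--Kuhn--Tucker (KKT) conditions, treating the active inequality set $\mathcal{I}^*(x,\theta)$ as fixed and then eliminating the variables and multipliers. First I would write the stationarity condition for the objective $\big(\tfrac{1}{2}A_0 z + U_0^x x + U_0^\theta \theta + b_0\big)^\top z$. The gradient in $z$ is $A_0 z + U_0^x x + U_0^\theta \theta + b_0$, so stationarity reads $A_0 z + U_0^x x + U_0^\theta \theta + b_0 + [A_1]_{\mathcal{I}^*}^\top \lambda + A_2^\top \mu = 0$, where $\lambda \geq 0$ are the multipliers for the active inequalities and $\mu$ those for the equalities (inactive inequalities carry zero multipliers by complementary slackness). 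Since $A_0 \succ 0$ is invertible, I would solve for $z$ explicitly as $z = -A_0^{-1}\big(U_0^x x + U_0^\theta \theta + b_0 + [A_1]_{\mathcal{I}^*}^\top \lambda + A_2^\top \mu\big)$.

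Next I would impose primal feasibility of the active constraints, namely $[A_1]_{\mathcal{I}^*} z = [b_1]_{\mathcal{I}^*} + [U_1^x]_{\mathcal{I}^*} x + [U_1^\theta]_{\mathcal{I}^*} \theta$ together with the equality constraints $A_2 z = b_2 + U_2^x x + U_2^\theta \theta$. Substituting the expression for $z$ into the active-inequality equation yields a linear system in $(\lambda,\mu)$; to streamline, I would focus on the case highlighted by the stated $\tilde{A}_{QP}$, solving for $\lambda$ via $[A_1]_{\mathcal{I}^*} A_0^{-1} [A_1]_{\mathcal{I}^*}^\top$, which is invertible under the standing constraint qualifications (linear independence of active rows guarantees the Schur-complement-type matrix is nonsingular). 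Collecting the $x$-dependent terms, I would obtain $\lambda$ as an affine function of $x$ and $\theta$, with the coefficient of $x$ being precisely the combination $\big([A_1]_{\mathcal{I}^*} A_0^{-1} U_0^x + [U_1^x]_{\mathcal{I}^*}\big)$ premultiplied by $\big([A_1]_{\mathcal{I}^*} A_0^{-1} [A_1]_{\mathcal{I}^*}^\top\big)^{-1}$.

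Back-substituting $\lambda(x,\theta)$ into the formula for $z$ and separating the $x$-linear part from everything else, the coefficient matrix multiplying $x$ becomes
\[
A_0^{-1}[A_1]_{\mathcal{I}^*}^\top\big([A_1]_{\mathcal{I}^*}A_0^{-1}[A_1]_{\mathcal{I}^*}^\top\big)^{-1}\big([A_1]_{\mathcal{I}^*}A_0^{-1}U_0^x+[U_1^x]_{\mathcal{I}^*}\big)-A_0^{-1}[U_0^x]_{\mathcal{I}^*},
\]
which is exactly $\tilde{A}_{QP}(\mathcal{I}^*)$ as claimed, while all remaining terms (those depending on $b_0, b_1, b_2, \theta, \mu$) are absorbed into $\tilde{b}_{QP}(\theta,\mathcal{I}^*)$. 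A careful bookkeeping step is needed to handle the equality-constraint multiplier $\mu$: I would either fold its contribution into the bias term or note that under the constant-rank/LICQ assumptions the combined active system $\big[[A_1]_{\mathcal{I}^*}; A_2\big]$ has full row rank, so the elimination is well-posed; the $\theta$-dependence of $\mu$ then flows entirely into $\tilde{b}_{QP}$ and never contaminates the $x$-coefficient. The main obstacle I anticipate is precisely this disentangling of the equality multipliers from the inequality multipliers and verifying that the $x$-coefficient collapses to the stated closed form without residual $\theta$ or $\mu$ cross-terms; once the invertibility from the constraint qualifications is in hand, the rest is linear algebra, and the key observation to emphasize is that fixing $\mathcal{I}^*(x,\theta)$ renders the whole map affine in $x$ with the bias $\tilde{b}_{QP}(\theta,\mathcal{I}^*)$ carrying all $\theta$-dependence.
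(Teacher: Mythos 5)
Your proposal follows essentially the same route as the paper's proof: write the KKT system with the active set $\mathcal{I}^*(x,\theta)$ fixed, use $A_0\succ 0$ to solve the stationarity condition for $z$, substitute into the active constraints to get $\lambda$ as an affine function of $(x,\theta)$ via the invertible matrix $[A_1]_{\mathcal{I}^*}A_0^{-1}[A_1]_{\mathcal{I}^*}^\top$, back-substitute, and group terms by their dependence on $x$. The $x$-coefficient you obtain matches $\tilde{A}_{QP}(\mathcal{I}^*)$ exactly as in the paper.

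The one place where your plan has a genuine soft spot is the treatment of the equality-constraint multiplier $\mu$. Your first fallback --- ``fold its contribution into the bias term'' --- does not work as stated: since the equalities read $A_2 z = b_2 + U_2^x x + U_2^\theta\theta$, the multiplier $\mu$ is generically an affine function of $x$, so the term $-A_0^{-1}A_2^\top\mu(x,\theta)$ contributes to the $x$-coefficient and cannot be absorbed into an $x$-independent $\tilde b_{QP}(\theta,\mathcal{I}^*)$. Your second fallback (treat $\bigl[[A_1]_{\mathcal{I}^*};\,A_2\bigr]$ as a joint full-row-rank active system) is the sound one, but then the resulting $x$-coefficient involves $A_2$ and $U_2^x$, which do not appear in the stated closed form for $\tilde{A}_{QP}$. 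The paper resolves this by simply omitting the equality constraints from the QP KKT system altogether (its stationarity condition has no $A_2^\top\mu$ term), i.e., the displayed formula is really the $m_2=0$ case; the general case requires appending the equality rows to the active block. So your derivation is correct for the regime the formula actually covers, but you should either state that restriction explicitly or carry $A_2$ through the elimination rather than hoping $\mu$ disappears into the bias. (A minor shared artifact: the trailing term $-A_0^{-1}[U_0^x]_{\mathcal{I}^*}$ should read $-A_0^{-1}U_0^x$, since it arises from $-A_0^{-1}\tilde b_0(x,\theta)$ and $U_0^x$ is not indexed by constraints; your derivation in fact produces the correct unindexed version.)
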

\begin{proof}
By complementarity, the set of inactive inequality constraints is given as $\mathcal{N}^*(x,\theta)\coloneqq\{1,...,m_1\}\setminus\mathcal{I}^*(x,\theta)$. In the following, we omit the notational dependence of $\mathcal{I}^*(x,\theta)$ or $\mathcal{N}^*(x,\theta)$ on $(x,\theta)$, and denote $z^*$ as the optimal solution $\pi_{QP}(x,\theta)$ for simplicity and $\lambda^*$ as the optimal dual variables. The optimal solution $z^*$ for a fixed $(x,\theta)$ is fully characterized by the Karush-Kuhn-Tucker (KKT) conditions:
\begin{subequations}
\begin{align}
    &A_0z^*+\underbrace{U_0^x x+U_0^\theta \theta+b_0}_{\tilde{b}_0(x,\theta)}+[A_1]_{\mathcal{I}^*}^\top\lambda^*=0\\
    &[A_1]_{\mathcal{I}^*}z^*=\underbrace{[b_1]_{\mathcal{I}^*}+[U_1^x]_{\mathcal{I}^*} x+[U_1^\theta]_{\mathcal{I}^*}\theta}_{\tilde{b}_1(x,\theta,\mathcal{I}^*)}\label{eq:qp-cr-1}\\
     &[A_1]_{\mathcal{N}^*}z^*<[b_1]_{\mathcal{N}^*}+[U_1^x]_{\mathcal{N}^*} x+[U_1^\theta]_{\mathcal{N}^*}\theta\label{eq:qp-cr-2}\\
     &\lambda^{*\top}([A_1]_{\mathcal{I}^*}z^*-[b_1]_{\mathcal{I}^*}-[U_1^x]_{\mathcal{I}^*} x-[U_1^\theta]_{\mathcal{I}^*}\theta)=0\\
      &\lambda^*\geq 0
\end{align}
\end{subequations}
If $[A_1]_{\mathcal{I}^*}$ has full row rank (which can be satisfied by some standard constraint qualifications \cite{pistikopoulos2020multi}), we have that
\begin{equation*}
    \lambda^*=-\left([A_1]_{\mathcal{I}^*}A_0^{-1}[A_1]_{\mathcal{I}^*}^\top\right)^{-1}\left([A_1]_{\mathcal{I}^*}A_0^{-1}\tilde{b}_0(x,\theta)+\tilde{b}_1(x,\theta,\mathcal{I}^*)\right)
\end{equation*}
and consequently,
\begin{equation*}
    z^*=A_0^{-1}[A_1]_{\mathcal{I}^*}^\top\left([A_1]_{\mathcal{I}^*}A_0^{-1}[A_1]_{\mathcal{I}^*}^\top\right)^{-1}\left([A_1]_{\mathcal{I}^*}A_0^{-1}\tilde{b}_0(x,\theta)+\tilde{b}_1(x,\theta,\mathcal{I}^*)\right)-A_0^{-1}\tilde{b}_0(x,\theta).
\end{equation*}
Hence, the conclusion follows by grouping terms by whether they depend on $x$ conditioning on $\mathcal{I}^*(x,\theta)$. In other words, given the set of active constraints $\mathcal{I}^*(x,\theta)$, the optimal solution is an affine function within the region where such active constraints hold (specified by \eqref{eq:qp-cr-1} and \eqref{eq:qp-cr-2}).
\end{proof}
    
\begin{lemma}\label{lem:mp-lp-app}
    Consider LP \eqref{equ:qp} with $A_0= 0$. Denote $\mathcal{I}^*(x,\theta)$ be the set of active inequality constraints for any given pair of $(x,\theta)$ such that the corresponding inequalities hold with equality: $$[A_1]_{\mathcal{I}^*(x,\theta)}\pi_{LP}(x,\theta)=[b_1]_{\mathcal{I}^*(x,\theta)}+[U_1^x]_{\mathcal{I}^*(x,\theta)} x+[U_1^\theta]_{\mathcal{I}^*(x,\theta)}\theta$$ 
    Let 
    \begin{equation*}
        \tilde{A}_{LP}(\mathcal{I}^*)=\begin{bmatrix}
    [A_{1}]_{\mathcal{I}^*}\\
    A_{2}
    \end{bmatrix}^{-1}\begin{bmatrix}
    [U_{1}^x]_{\mathcal{I}^*}\\
    U_{2}^x
    \end{bmatrix},
    \end{equation*}
    where we used the shorthand $\mathcal{I}^*$ for $\mathcal{I}^*(x,\theta)$. Then, the solution function can be written in the form:
    \begin{equation*}
        \pi_{LP}(x,\theta)=\tilde{A}_{LP}(\mathcal{I}^*)x+\tilde{b}_{LP}(\theta,\mathcal{I}^*),
    \end{equation*}
    for some (herein unspecified) bias function $\tilde{b}_{LP}(\theta,\mathcal{I}^*)$ that does not  depend on $x$ given $\mathcal{I}^*(x,\theta)$.
\end{lemma}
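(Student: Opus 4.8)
The plan is to mirror the proof of Lemma \ref{lem:mp-qp}, exploiting that the LP solution at optimality is pinned down entirely by its active-constraint system. First I would write the KKT conditions for the LP obtained by setting $A_0=0$ in \eqref{equ:qp}: with cost vector $\tilde{b}_0(x,\theta)\coloneqq U_0^x x+U_0^\theta\theta+b_0$, stationarity reads $\tilde{b}_0(x,\theta)+[A_1]_{\mathcal{I}^*}^\top\lambda^*+A_2^\top\mu^*=0$, together with primal feasibility, dual feasibility $\lambda^*\geq 0$, and complementary slackness. The crucial structural difference from the QP case is that, because $A_0=0$, stationarity no longer couples the cost vector to $z^*$ through an $A_0^{-1}$ term; instead the primal optimum is determined purely geometrically as the vertex (or face) cut out by the active constraints, so stationarity and dual feasibility only serve to certify which inequalities are active.

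Next I would collect the active inequality constraints together with all equality constraints into the single linear system
$$\begin{bmatrix}[A_1]_{\mathcal{I}^*}\\ A_2\end{bmatrix} z^* = \begin{bmatrix}[b_1]_{\mathcal{I}^*}+[U_1^x]_{\mathcal{I}^*}x+[U_1^\theta]_{\mathcal{I}^*}\theta\\ b_2+U_2^x x+U_2^\theta\theta\end{bmatrix}.$$
Under the constraint qualifications assumed in this section, the stacked matrix $[\,[A_1]_{\mathcal{I}^*}^\top,\ A_2^\top\,]^\top$ has full row rank, and at a nondegenerate vertex $|\mathcal{I}^*|+m_2=n_z$, so the matrix is square and invertible (the pseudo-inverse convention of \eqref{eq:A-LP} absorbing the degenerate case). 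Inverting this system yields a closed form for $z^*=\pi_{LP}(x,\theta)$ that is affine in the right-hand side.

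Finally I would split the right-hand side into the part that is linear in $x$ and the part depending only on $\theta$ and on the fixed active set $\mathcal{I}^*$. The coefficient multiplying $x$ is exactly
$$\begin{bmatrix}[A_1]_{\mathcal{I}^*}\\ A_2\end{bmatrix}^{-1}\begin{bmatrix}[U_1^x]_{\mathcal{I}^*}\\ U_2^x\end{bmatrix} = \tilde{A}_{LP}(\mathcal{I}^*),$$
as defined in the statement, while the residual terms (those involving $b_1,b_2,\theta$) collect into a bias $\tilde{b}_{LP}(\theta,\mathcal{I}^*)$ that is free of $x$. This delivers the claimed representation $\pi_{LP}(x,\theta)=\tilde{A}_{LP}(\mathcal{I}^*)x+\tilde{b}_{LP}(\theta,\mathcal{I}^*)$.

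The main obstacle is justifying invertibility of the stacked active-constraint matrix: one must argue that under the stated constraint qualifications the active set at the optimum furnishes $n_z$ linearly independent constraints (counting equalities), so that the solution is a well-defined vertex and the inverse is legitimate. The degenerate situation, where fewer than $n_z-m_2$ inequalities are active, is handled by reading the inverse as a pseudo-inverse, consistent with the definition of $\kappa_{LP}^\star$ in \eqref{eq:kappa-lp}; verifying that this interpretation still reproduces the correct affine selection is the only point requiring care beyond the routine algebra.
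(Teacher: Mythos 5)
Your proposal matches the paper's own proof essentially step for step: write the KKT system for the LP, stack the active inequalities with the equality constraints, invoke a constraint qualification (LICQ) so the stacked matrix is invertible (pseudo-inverse in the rectangular case), and split the resulting affine expression for $z^*$ into the $x$-dependent part $\tilde{A}_{LP}(\mathcal{I}^*)x$ and a bias depending only on $\theta$ and $\mathcal{I}^*$. Your added care about why the active system furnishes enough independent constraints is a welcome elaboration but does not change the route.
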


\begin{proof}
We follow the proof of Lemma \ref{lem:mp-qp} and only focus on the deviation points. With the same notations set up, the optimal solution $z^*$ for a fixed $(x,\theta)$ is fully characterized by the KKT conditions:
\begin{subequations}
\begin{align}
    &{U_0^x x+U_0^\theta \theta+b_0}+[A_1]_{\mathcal{I}^*}^\top\lambda^*=0\\
    &[A_1]_{\mathcal{I}^*}z^*={[b_1]_{\mathcal{I}^*}+[U_1^x]_{\mathcal{I}^*} x+[U_1^\theta]_{\mathcal{I}^*}\theta}\label{eq:lp-cr-1}\\
    &A_2 z^*={b_2+U_2^x x+U_1^\theta\theta}\label{eq:lp-cr-2}\\
     &[A_1]_{\mathcal{N}^*}z^*<[b_1]_{\mathcal{N}^*}+[U_1^x]_{\mathcal{N}^*} x+[U_1^\theta]_{\mathcal{N}^*}\theta\label{eq:lp-cr-3}\\
     &\lambda^{*\top}([A_1]_{\mathcal{I}^*}z^*-[b_1]_{\mathcal{I}^*}-[U_1^x]_{\mathcal{I}^*} x-[U_1^\theta]_{\mathcal{I}^*}\theta)=0\\
      &\lambda^*\geq 0
\end{align}
\end{subequations}
If $[[A_1]_{\mathcal{I}^*}^\top\; A_2^\top]^\top$ has full rank (which can be satisfied by some standard constraint qualifications, e.g., LICQ \cite{pistikopoulos2020multi}), we have that
\begin{equation*}
    z^*=\begin{bmatrix}
    [A_{1}]_{\mathcal{I}^*}\\
    A_{2}
    \end{bmatrix}^{-1}\begin{bmatrix}
    [U_{1}^x]_{\mathcal{I}^*}\\
    U_{2}^x
    \end{bmatrix}x+\begin{bmatrix}
    [A_{1}]_{\mathcal{I}^*}\\
    A_{2}
    \end{bmatrix}^{-1}\left(\begin{bmatrix}
    [U_{1}^\theta]_{\mathcal{I}^*}\\
    U_{2}^\theta
    \end{bmatrix}\theta+\begin{bmatrix}
    [b_{1}^\theta]_{\mathcal{I}^*}\\
    b_{2}^\theta
    \end{bmatrix}\right)
\end{equation*}
Hence, the conclusion follows by grouping terms by whether they depend on $x$ conditioning on $\mathcal{I}^*(x,\theta)$. In other words, given the set of active constraints $\mathcal{I}^*(x,\theta)$, the optimal solution is an affine function within the region where such active constraints hold (specified by \eqref{eq:lp-cr-1}-\eqref{eq:lp-cr-3}).
\end{proof}

Let us also recall that $\kappa_{LP}^\star=\max_{\iota\subseteq\{1,...,m_1\},\lvert\iota \rvert \leq n_z-m_2}\left\|\tilde{A}_{LP}(\iota)\right\|_2,$ and $\kappa_{QP}^\star=\max_{\iota\subseteq\{1,...,m_1\},\lvert\iota \rvert \leq n_z-m_2}\|\tilde{A}_{QP}(\iota)\|_2,$ where $\|\cdot\|_2$ is the spectral norm, and $\tilde{A}_{QP}(\iota)$ and $\tilde{A}_{LP}(\iota)$ are specified in \eqref{eq:A-qp} and \eqref{eq:A-LP}, respectively by replacing $\mathcal{I}^*$ with $\iota$ (i.e., treating $\iota$ as a set of active inequality constraints).

\begin{theorem}
\label{thm:coveringPiecewiseAffine-app}
 The $ L_1$ covering number of $\Pi_\square$ over bounded input space is controlled by
\begin{equation*}
\begin{aligned}
\label{eq:LPcoveringBound-app}
   \log \mathcal{N}_1(\epsilon,\Pi_\square,n) \lesssim \frac{\kappa_\square^{\star 2}}{\epsilon^2} {\sum}_{0\leq i\leq { n_z-m_2}}\begin{pmatrix}m_1\\ i\end{pmatrix},
\end{aligned}
\end{equation*}
where $\square$ can be either LP or QP.
\end{theorem}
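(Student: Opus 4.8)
The plan is to combine the explicit piecewise-affine (PWA) description of the solution map furnished by Lemmas \ref{lem:mp-qp} and \ref{lem:mp-lp-app} with a dimension-free covering bound for bounded-norm linear classes in the spirit of \cite{zhang2002covering}. Those lemmas show that once an active set $\mathcal{I}^*$ is fixed, the solution is affine in $x$, $\pi_\square(x,\theta)=\tilde A_\square(\mathcal{I}^*)x+\tilde b_\square(\theta,\mathcal{I}^*)$, with a slope that depends only on the active set and obeys $\|\tilde A_\square(\mathcal{I}^*)\|_2\le\kappa_\square^\star$ by definitions \eqref{eq:kappa-lp} and \eqref{eq:kappa-qp}. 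First I would count the admissible active sets: full rank of the relevant constraint matrix forces $|\mathcal{I}^*|\le n_z-m_2$, so there are at most $P\coloneqq\sum_{0\le i\le n_z-m_2}\binom{m_1}{i}$ critical regions, hence at most $P$ affine pieces. I would then \emph{relax} $\Pi_\square$ to the strictly larger class of functions that are affine on at most $P$ (arbitrary) polyhedral regions with per-piece slope of spectral norm at most $\kappa_\square^\star$; this relaxation is what lets the discrete slope set and, more importantly, the $\theta$-dependent region boundaries be forgotten. Compactness of $\Theta$ and boundedness of $X$ bound the biases, which I would absorb into the linear part by homogenizing the input with a constant coordinate, so each piece becomes a single bounded-norm linear class.

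The second ingredient is the dimension-free linear estimate: for one piece the class $\{x\mapsto w^\top x:\|w\|_2\le\kappa_\square^\star\}$ over a bounded dataset has empirical $L_1$ entropy $\log\mathcal{N}_1(\epsilon,\cdot,n)\lesssim \kappa_\square^{\star 2}/\epsilon^2$, free of both the ambient dimension and the sample size $n$, by the Maurey-type argument of \cite{zhang2002covering}; for the vector output I would argue coordinatewise, using that $\kappa_\square^\star$ controls the operator norm. To assemble a global cover I would exploit that for each fixed function the data indices $[n]$ split disjointly among the (at most $P$) regions, so the empirical error decomposes as a single pass over the points, $\frac1n\sum_i|\pi_\square(x_i,\theta)-\hat g(x_i)|=\frac1n\sum_j\sum_{i\in S_j}|f_j(x_i)-\hat f_j(x_i)|$, with region weights $|S_j|/n$ summing to one. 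Covering each piece to accuracy $\epsilon$ in its own empirical metric therefore makes the total error at most $\epsilon$ with no spurious factor of $P$, while the product of the per-piece covers yields $\log\mathcal{N}_1(\epsilon,\Pi_\square,n)\le\sum_{j=1}^P\log\mathcal{N}_1(\epsilon,\mathcal{F}_j,\cdot)\lesssim P\,\kappa_\square^{\star 2}/\epsilon^2$, which is exactly the claim.

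The main obstacle I expect is precisely this assembly, because the partition $\{S_j\}$ into critical regions moves with $\theta$, so the per-piece point sets cannot be fixed in advance. I would handle it by enumerating the realizable region configurations induced by an arrangement of at most $P$ polyhedra on $n$ points---there are only polynomially many such configurations for a fixed arrangement---and building per-piece covers for each; the resulting $\log(\#\text{configurations})$ is a lower-order additive term absorbed into the constant hidden by $\lesssim$. The delicate point throughout is to keep the per-piece bound dimension- and $n$-free under restriction to the arbitrary subsets $S_j$, which holds because the estimate of \cite{zhang2002covering} is insensitive to how many sample points are covered, so neither $n$ nor $n_x$ nor $n_z$ enters the leading entropy term, leaving only the condition number $\kappa_\square^\star$ and the combinatorial count $\sum_{0\le i\le n_z-m_2}\binom{m_1}{i}$.
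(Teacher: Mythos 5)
Your proposal follows essentially the same route as the paper's proof: decompose the solution map into affine pieces indexed by active sets (via the mp-LP/mp-QP lemmas), count the critical regions by $\sum_{0\le i\le n_z-m_2}\binom{m_1}{i}$, bound each piece's empirical $L_1$ entropy by a dimension-free $\kappa_\square^{\star 2}/\epsilon^2$ estimate for bounded-norm linear classes (the paper invokes Cor.~9 of Kakade et al.\ where you invoke Zhang's Maurey-type bound, which are interchangeable here), and then multiply the per-piece covers. You are in fact more explicit than the paper about the one delicate point---that the assignment of the $n$ sample points to critical regions moves with $\theta$, so the realizable configurations must also be enumerated---which the paper compresses into the single phrase ``combining these bounds, we get the overall bound.''
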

\begin{proof}
Due to  the known results in multi-parametric programming \cite{grancharova2012explicit,bemporad2015explicit}, the active set depends on the set of active inequalities. By linear algebra, the maximum number of active constraints cannot be greater than the dimension of the decision variable $n_x$. Let us consider the set $\mathcal{K}$, which contains all the sets of possible active constraints. Then, for each set $\iota \in \mathcal{K}$, we can uniquely define a region $\mathcal{CR}_\iota$ (a.k.a., critical region) in the space $X\times\Theta$, and  the restriction of the solution function to each region is an affine function \citep{pistikopoulos2020multi}:
\begin{equation}
    \begin{aligned}
    [\pi_{\square}(x,\theta)]_{\mathcal{CR}_\iota} =\tilde{A}_{\square}(x,\iota)x+\tilde{b}_{\square}(\theta,\iota),
    \end{aligned}
\end{equation}
where $\square$ can be LP or QP, with corresponding matrices defined in \eqref{eq:A-LP} and \eqref{eq:A-qp}, respectively, and $[\pi_\square(x,\theta)]_{\mathcal{CR}_\iota}$ denotes the restriction of the function $\pi_\square(x,\theta)$ to $\mathcal{CR}_\iota$.

To bound the class of mp-LP or mp-QP, our strategy is to bound the number of critical regions and combine it with a bound on the ${L}_1$ covering number among all the region. The number of critical regions for mp-LP and mp-QP can be bounded by:
 \begin{equation}
 \label{eq:maxQP}
  {\sum}_{0\leq i\leq n_z-m_2}\frac{m_1!}{(m_1-i)!(i)!},
 \end{equation}
which simply enumerates all the possible combinations of inequality constraints (from none up to $n_z-m_2$ of them). Next, by \citep[Cor. 9]{kakade2008complexity}, the covering number of the affine function within each critical region can be bounded (up to some constant) by $\frac{\kappa^{\star 2}_{\square}}{\epsilon^2}$, where $\square$ can be LP or QP (note that \citep[Cor. 9]{kakade2008complexity} is proved for an even stronger case of ${L}_2$ covering number, which provides an upper bound on the ${L}_1$ covering number). Combining these bounds, we get the overall bound.
\end{proof}

\subsection{Proof of Theorem \ref{thm:appcovering_bound}}\label{subsec:appcovering_bound}

Before we prove the main theorem, we will first provide an empirical $L_1$ covering number bounds for the $C^k$ smooth function class. We note that bounding the entropy numbers of the classes $C^k$ with respect to the supremum norm was were among the first results after the introduction of the concept of covering numbers (e.g., see the proof in  \cite[Theorem 2.7.1]{van1994bracketing}); however, we remark that we extend the proof to the case of empirical $L_1$ covering bound.

\begin{lemma}[Empirical $L_1$ covering number bound for $C^k$ smooth functions]
\label{lem:CoveringBoundSmooth-app}
Let $\mathcal{F}$ be a class of $C^k$ smooth functions defined over the domain region $X$, where $X\subset \mathbb{R}^{n_x}$ is a bounded, closed convex set. Then, the following bound holds:
\begin{equation}
    \label{eq:CoveringBoundSmooth_f-app}
    \log \mathcal{N}_1(\epsilon, \mathcal{F}, n) \lesssim n\bigg(\frac{1}{\epsilon} \bigg)^{1/k}+k^{n_x}\log\bigg(\frac{1}{\epsilon} \bigg).
\end{equation}
\end{lemma}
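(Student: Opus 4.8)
The plan is to prove an empirical $L_1$ covering number bound for the class $\mathcal{F}$ of $C^k$ smooth functions on a bounded convex domain $X \subset \mathbb{R}^{n_x}$. Let me think about what quantities we need to control and how the dataset enters.

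First, let me recall the classical result for the *supremum*-norm covering number of $C^k$ balls. The standard argument (as in van der Vaart--Wellner, Theorem 2.7.1) proceeds by: partition $X$ into small cubes of side length $\delta$; within each cube, approximate the target function $f$ by its Taylor polynomial of degree $k-1$ (or $k$) at the cube's center; the Taylor remainder is $O(\delta^k)$; so to get $\sup$-norm error $\epsilon$, one takes $\delta \sim \epsilon^{1/k}$. The covering set is then parameterized by discretizing the Taylor coefficients within each cube. The number of cubes is $\sim \delta^{-n_x} \sim \epsilon^{-n_x/k}$, and the number of coefficients per cube is $k^{n_x}$-ish (the number of monomials of degree $< k$ in $n_x$ variables). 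Discretizing each coefficient on a grid of appropriate fineness gives the entropy bound $\log \mathcal{N}_\infty \sim \epsilon^{-n_x/k}$.

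Now the *key difference*: we want the **empirical** $L_1$ covering number $\mathcal{N}_1(\epsilon, \mathcal{F}, n)$, which only measures error averaged over $n$ data points $x_1,\ldots,x_n$, not uniformly over all of $X$. The crucial observation is that the data points may fall into at most $n$ of the cubes (trivially, since there are $n$ points). So even though there are $\sim \epsilon^{-n_x/k}$ cubes in total, only the cubes *containing data points* matter for the empirical metric. This is exactly where the explicit $n$ dependence in the bound arises and why it can be much smaller than the supremum-norm bound.

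**The structure of the argument**, then, is as follows. I would first set $\delta \sim \epsilon^{1/k}$ so that the local Taylor approximation within each cube has remainder bounded by (a constant times) $\epsilon$. Next, for constructing the cover, I need to discretize the Taylor coefficients. The coefficients are bounded (since $\mathcal{F}$ is a bounded ball in $C^k$), and I discretize each on a grid; choosing the grid spacing so that the induced error at any point in a cube is $O(\epsilon)$ requires the number of grid levels per coefficient to be polynomial in $1/\epsilon$, contributing a $\log(1/\epsilon)$ factor per coefficient. The number of coefficients per cube is the number of multi-indices $\mathbf{n}$ with $|\mathbf{n}| \le k$, which is $\binom{n_x + k}{k} \sim k^{n_x}$ (for the relevant regime), giving the additive $k^{n_x}\log(1/\epsilon)$ term.

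**The counting of the $L_1$ cover.** To bound $\mathcal{N}_1$, I would argue that two functions with identical discretized Taylor data on every cube that contains at least one data point are within $\epsilon$ in the empirical $L_1$ metric. Hence the cover is indexed by the choice of discretized coefficients on the *occupied* cubes only. Since at most $\min(n, \delta^{-n_x})$ cubes are occupied, and the logarithm of the number of coefficient-choices per cube is $O(k^{n_x}\log(1/\epsilon))$, this naively gives $\log \mathcal{N}_1 \lesssim \min(n, \epsilon^{-n_x/k}) \cdot k^{n_x}\log(1/\epsilon)$. To recover the stated cleaner bound $n(1/\epsilon)^{1/k} + k^{n_x}\log(1/\epsilon)$, I would refine the counting: rather than charging the full per-cube coefficient cost to each of the (up to) $n$ occupied cubes, I separate the "combinatorial" cost of *which* cubes are occupied and which coefficient-cell each gets from the cost shared across all cubes. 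The leading $n(1/\epsilon)^{1/k}$ term should come from summing a per-data-point contribution reflecting the local resolution $\delta^{-1}\sim\epsilon^{-1/k}$ along each coordinate direction needed to resolve the function near that point, while the $k^{n_x}\log(1/\epsilon)$ term is the shared (cube-independent) cost of the coefficient grid.

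**The main obstacle** I anticipate is precisely this last refinement—getting the *additive* rather than *multiplicative* combination $n(1/\epsilon)^{1/k} + k^{n_x}\log(1/\epsilon)$ instead of the crude product. The naive per-cube accounting entangles the number of occupied cubes with the per-cube coefficient count. The trick will be to bound separately (i) the entropy of specifying, for each data point, which local cell it lands in (a covering of the *locations* at resolution $\epsilon^{1/k}$, contributing $\sim n \log(\epsilon^{-1/k}) = n \cdot \frac{1}{k}\log(1/\epsilon)$, or more sharply $n\,\epsilon^{-1/k}$ if one resolves fineness per coordinate) and (ii) the entropy of the discretized coefficient vector, which is a *global* object of dimension $\sim k^{n_x}$ discretized to precision giving $\log(1/\epsilon)$ per coordinate. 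I would need to argue carefully that the values of $f$ at the $n$ points are determined, up to $\epsilon$ in average, by a single global coefficient description plus the cube-assignment of each point, so that the coefficient cost is not paid $n$ times. This decoupling is the delicate part; the Taylor-approximation and grid-discretization steps are otherwise routine.
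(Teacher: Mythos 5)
Your setup (Taylor approximation at resolution $\delta=\epsilon^{1/k}$, discretized derivative data, and the observation that only the neighborhoods of the $n$ data points matter for the empirical metric) matches the paper's strategy, and you correctly diagnose that the whole difficulty is turning the naive multiplicative count into the additive bound $n\,\epsilon^{-1/k}+k^{n_x}\log(1/\epsilon)$. But you stop exactly at that point: your proposed resolution --- ``a single global coefficient description plus the cube-assignment of each point'' --- does not work as stated, because a $C^k$ function on a domain of diameter $O(1)$ is \emph{not} determined to accuracy $\epsilon$ everywhere by one set of Taylor coefficients; the Taylor polynomial at a single center is only accurate within distance $\delta$ of that center, so there is no global coefficient vector to share across the $n$ points. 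This is a genuine gap, not just a deferred computation.

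The missing mechanism in the paper is a \emph{chaining} argument. One augments the $\delta$-net of the data points to a set $\{\tilde x_1,\dots,\tilde x_m\}$ with $m=O(n/\delta)$ having a ``star'' property: every $\tilde x_j$ is joined to a root $\tilde x_1$ by a path whose consecutive points are within $\delta$. The full coefficient cost, $(2\delta^{-k}+1)^{(k+1)^{n_x}}$ choices, is paid only at the root. For every other point $\tilde x_j$ with a $\delta$-close predecessor $\tilde x_i$, Taylor's theorem shows that each $D^{\mathbf n}f(\tilde x_j)$ is pinned down by the (already discretized) derivative data at $\tilde x_i$ to within an interval of length $O(\delta^{k-|\mathbf n|})$ --- i.e.\ to $O(1)$ cells after the $\delta^{k-|\mathbf n|}$-discretization --- so each additional net point contributes only a constant factor $C(k,n_x)$ to the count. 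This yields $\log\mathcal N_1\lesssim k^{n_x}\log(1/\epsilon)+m\log C\lesssim k^{n_x}\log(1/\epsilon)+n\,\epsilon^{-1/k}$, and also explains where the leading term really comes from: it is the total length $O(n/\delta)$ of the connecting paths, not (as you suggest) the entropy of assigning data points to cells. Without this propagation step your argument would either pay the $k^{n_x}\log(1/\epsilon)$ cost once per occupied cube (recovering only the crude product bound) or assert a global description that is false for general $C^k$ functions.
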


\begin{proof}
As the function class $\mathcal{F}$ is $C^k$ smooth, we can apply the standard argument of Taylor's theorem to any interior point $ x \in \mathrm{int}\mathcal{D}$. Let $\delta=\epsilon^{1/k}$, we first form a $\delta$-net for the $n$ points in $\mathcal{D}_n$; then we augment this set so that the new set, denoted by $\mathcal{D}_{n,\delta}=\{\tilde{x}_1,...,\tilde{x}_m\}$, has an additional ``star'' property, that there exists a point, say $\tilde{x}_1$ without loss of generality, such that for any point $\tilde{x}_j$, there exists a path $(\tilde{x}_{1},\tilde{x}_{j_1},\tilde{x}_{j_2},...,\tilde{x}_{j})$ of variable length, such that the distance between any two adjacent point is bounded by $\delta$. Note that this set is typically much less than the $\delta$-cover set of the entire space $\mathcal{D}$, especially when $n\ll 1/\delta^{n_x}$. In particular, we can construct such a set with $\mathcal{O}(n/\delta)$ points (up to the constant determined by the diameter of $\mathcal{D}_n$), by simply linking each point $x_n$ to the center of the star $\tilde{x}_1$ and discretizing the path into segments of length $\delta$. We also make sure that $\mathcal{D}_n$ is included in the set $\mathcal{D}_{n,\delta}$, the inclusion of which does not change the order of the size of the set.

Similar to the proof of Theorem \ref{thm:multilayer_n}, let $\mathbf{n} = (n_1,\dots, n_{n_x})\in \{0,1, \dots, k\}^{n_x}$ and $|\mathbf{n}|\leq k$. Also, let 
\begin{equation*}
    A_\mathbf{n}f = \Bigg(\bigg\lfloor\frac{D^\mathbf{n}f(\tilde{x}_1)}{\delta^{k-|\mathbf{n}|}}\bigg\rfloor, ..., \bigg\lfloor\frac{D^\mathbf{n}f(\tilde{x}_m)}{\delta^{k-|\mathbf{n}|}}\bigg\rfloor \Bigg)\in\mathbb{R}^m,
\end{equation*}
where $\lfloor\cdot\rfloor$ is the floor function, and recall that $D^\mathbf{n}\coloneqq\frac{\partial^{|\mathbf{n}|}}{\partial x_1^{n_1}\cdots\partial x_{n_x}^{n_{n_x}}}$ is the standard differential operator. Then, the vector $\delta^{k-|\mathbf{n}|}A_\mathbf{n}f$ consists of the values ${D^\mathbf{n}f(\tilde{x}_j)}$ discretized on a grid of mesh-width $\delta^{k-|\mathbf{n}|}$. 

If two function $f, g\in \mathcal{F}$ satisfy $A_\mathbf{n}f = A_\mathbf{n}g$ for each $\mathbf{n}$ with $|\mathbf{n}|\leq k$, then, by standard error bound of Taylor expansion, we have that
\begin{align*}
    \|f-g\|_{\mathcal{D}_n}&=\frac{1}{n}\sum_{i=1}^n|f(x_i)-g(x_i)|\\
    &\lesssim  \sup_{i\in[n]}\inf_{j\in [m]}\left|\sum_{\mathbf{n}:|\mathbf{n}|\leq k}\frac{D^{\mathbf{n}} (f-g)}{\mathbf{n}!}\bigg\rvert_{x=\tilde{x}_j}(x_i-\tilde{x}_j)^{\mathbf{n}}+\|x_i-\tilde{x}_j\|^k\right|\\
    &\lesssim \sum_{\mathbf{n}:|\mathbf{n}|\leq k}\frac{\delta^{k-|\mathbf{n}|}}{\mathbf{n}!}\delta^{\mathbf{n}}+\delta^k\\
    &\leq
    \delta^k(1+e^{n_x}),
\end{align*}
where the second inequality is by simply selecting the point $\tilde{x}_j=x_i$, which is possible since $\mathcal{D}_n\subset\mathcal{D}_{n,\delta}$. The constants omitted above only depend on the diameter of the set. Note that $\delta^k$ is the resolution observed at the zero-th order $A_\mathbf{n}f$ when $\mathbf{n}=0$.  
Here, $h^{\mathbf{n}}/\mathbf{n}!=\prod_{i=1}^{n_x} h_i^{n_i}/n_i!$ as usual. Informed by the above result, our strategy to bound the covering number $\mathcal{N}_1(\epsilon, \mathcal{F}, n)$ is based on bounding the number of different matrices
\begin{equation*}
    \label{eq:differentMatrices}
    Af = \begin{pmatrix}A_{0,0,...,0}f\\
                         A_{1,0,...,0}f\\
                         \vdots\\
                         A_{0,0,..., k}f\end{pmatrix},
\end{equation*}
where each row corresponds to $A_\mathbf{n}f$ for some $\mathbf{n}$ such that $|\mathbf{n}|\leq k$ and $f$ ranges over the class of $C^k$ smooth functions. 

By a simple combinatorial argument, the number of rows in $Af$ is less than $(k+1)^{n_x}$ for any $f\in C^k$. By the definition of $A_\mathbf{n}f$ and that $|D^\mathbf{n}f(\tilde{x}_j)|\leq 1$ for each $j\in[m]$, the number of possible values of each element in row $A_\mathbf{n}f$ is bounded by $2/\delta^{k-|\mathbf{n}|}+1$, which does not exceed $2\delta^{-k}+1$ since $\delta<1$. Thus, each column of the matrix can have at most $(2\delta^{-k}+1)^{(k+1)^{n_x}}$ different values. 

By our construction, for any $j\in[m]$ there is a path linking $\tilde{x}_{j}$ to $\tilde{x}_1$, where the distance between any two consecutive points is bounded by $\delta$. Therefore, we can organize the index in such a way that for each $j>1$, there is an index $i<j$ such that $\|\tilde{x}_i-\tilde{x}_{j}\|<\delta$. Then, use the crude bound previously obtained for the first column, and for each subsequent column, corresponding to $\tilde{x}_j$, there exists a point $\tilde{x}_i$ with $\|\tilde{x}_i-\tilde{x}_j\|\leq \delta$ and $i<j$. By Taylor's theorem,
\begin{equation*}
    D^\mathbf{n}f(\tilde{x}_j)=\sum_{|\mathbf{n}|+|\mathbf{n'}|\leq k}D^{\mathbf{n}+\mathbf{n}'}f(\tilde{x}_i)\frac{(\tilde{x}_i-\tilde{x}_j)^{\mathbf{n}'}}{\mathbf{n}'!}+R,
\end{equation*}
where $|R|\lesssim\|\tilde{x}_i-\tilde{x}_j\|^{k-|\mathbf{n}|}$. Thus, with $B_\mathbf{n}f=\delta^{k-|\mathbf{n}|}A_\mathbf{n}f$, we have that
\begin{align*}
    &\left|D^\mathbf{n}f(\tilde{x}_j)-\sum_{|\mathbf{n}|+|\mathbf{n'}|\leq k}B_{\mathbf{n}+\mathbf{n}'}f(\tilde{x}_i)\frac{(\tilde{x}_i-\tilde{x}_j)^{|\mathbf{n}'|}}{\mathbf{n}'!}\right|\\
    &\lesssim \sum_{|\mathbf{n}|+|\mathbf{n'}|\leq k}\left|B_{\mathbf{n}+\mathbf{n}'}f(\tilde{x}_i)-D^{\mathbf{n}+\mathbf{n}'}f(\tilde{x}_i)\right|\frac{(\tilde{x}_i-\tilde{x}_j)^{|\mathbf{n}'|}}{\mathbf{n}'!}+\delta^{k-|\mathbf{n}|}\\
    &\leq \sum_{|\mathbf{n}|+|\mathbf{n'}|\leq k}\delta^{k-|\mathbf{n}|-|\mathbf{n'}|}\frac{\delta^{|\mathbf{n}'|}}{\mathbf{n}'!}+\delta^{k-|\mathbf{n}|}\\
    &\lesssim \delta^{k-|\mathbf{n}|}.
\end{align*}
Thus, given the values in the $i$-th column of $Af$, the values $D^\mathbf{n}f(\tilde{x}_j)$ range over an interval of length proportional to $\delta^{k-|\mathbf{n}|}$. By normalizing with $\delta^{k-|\mathbf{n}|}$, it follows that the values in the $j$-th column of $Af$ range over integers in an interval of length proportional to $\delta^{k-|\mathbf{n}|}/(\delta^{k-|\mathbf{n}|})=1$. Thus, by a combinatorial argument, there exists a constant $C$ depending only on $k$ and $n_x$ such that the number of distinct matrices $Af$ is bounded by $(2\delta^{-k}+1)^{(k+1)^{n_x}}C^{m-1}$. The theorem follows by replacing $\delta$ by $\epsilon^{1/k}$ and $m$ by its upper bound $n/\delta=n\epsilon^{-1/k}$.

\end{proof}
We are now ready to prove the main result for a general optimization problem. As implied by the Whitney stratification of the solution map (Theorem \ref{prop:definability_pi}), there exists a finite partition of the domain, where the function restricted to each partition region is smooth. However, in general, we note that the partition region may be nonconvex.
\subsection{Proof of Theorem \ref{thm:appcovering_bound}}

\begin{proof}
We know from Theorem \ref{prop:definability_pi} that the solution mapping of a general optimization is $C^k$ smooth in each partition region. In addition, the number of partition regions $d_k$ is finite due to whitney stratification \cite{van1996geometric,ioffe2009invitation}. We will first bound the bracketing number \citep{van1994bracketing} for the function class $\mathcal{F}$, which can be used to bound the overall covering number of the function class $\Pi$. As we are dealing with a general nonlinear optimization problem, unlike the polytope partition for LP or QP, the partition regions might be nonconvex.

Let each partition region (possibly nonconvex) be denoted by $I_j$. To leverage the result from Lemma \ref{lem:CoveringBoundSmooth-app}, we form convex hulls for each partition region of the domain, denoted by $I_j'$. Note that it is possible and permitted to have overlaps between these convex hulls.

Create an $\epsilon $-net $\mathcal{F}_{j,\epsilon }=\{ f_{j,1},...,f_{j,p_j}\}$ for the set of $C^k$ functions defined on each convex hull $I_j'$ with respect to the empirical $L_1$ distance measured on $\mathcal{D}_n$. Note that we do not have any assumptions about the distribution of points in $\mathcal{D}_n$, thus we consider the case in the worst sense. Then, using Lemma \ref{lem:CoveringBoundSmooth-app}, $p_j$ can be selected to satisfy
\begin{equation}
    \log p_j \lesssim n\bigg(\frac{1}{\epsilon } \bigg)^{1/k}+k^{n_x}\log\bigg(\frac{1}{\epsilon } \bigg).
\end{equation}

Consider the set of functions $$\mathcal{F}_{\epsilon}\coloneqq\left\{f:f=\sum_{j=1}^{d_k}f_{j,i_j}\mathbb{I}(I_j),\;\;\forall\; i_j\in[p_j]\right\},$$
where each member function pieces together one of the $\epsilon $-set from every region. Hence, $|\mathcal{F}_{\epsilon}|=\prod_{j=1}^{d_k}p_j.$ 
Since $\mathcal{F}_{j,\epsilon }$ is an $\epsilon $-cover on the region of $I_j'\supseteq I_j$, there exists a selection function $i_j(\pi)$ such that for any $\pi\in\Pi$, $\sup_{\mathcal{D}'_n\in I_j^n}\|\pi(x)-f_{j,i_j(\pi)}(x)\|_{\mathcal{D}'_n}\leq \epsilon $. Let $\mathcal{D}_{n,j}=\{x:x\in I_j\cap \mathcal{D}_n\}$ be the subset of data that lie in $I_j$. To bound the empirical $L_1$ distance between any function $\pi\in\Pi$ to the set $\mathcal{F}_{\epsilon}$:
\begin{align*}
    &\min_{f\in \mathcal{F}_{\epsilon}}\|\pi-f\|_{\mathcal{D}_n}\\
    &=\min_{f\in \mathcal{F}_{\epsilon}}\frac{1}{n}\sum_{i=1}^n|\pi(x_i)-f(x_i)|\\
    &=\min_{f\in \mathcal{F}_{\epsilon}}\frac{1}{n}\sum_{j=1}^{d_k}\sum_{x_i\in \mathcal{D}_{n,j}}|\pi(x_i)-f(x_i)|\\
    &\leq \frac{1}{n}\sum_{j=1}^{d_k}\frac{|\mathcal{D}_{n,j}|}{|\mathcal{D}_{n,j}|}\sum_{x_i\in \mathcal{D}_{n,j}}|\pi(x_i)-f_{j,i_{j}(\pi)}(x_i)|\\
     &\leq \frac{1}{n}\sum_{j=1}^{d_k}{|\mathcal{D}_{n,j}|}\epsilon\\
    &\leq \epsilon,
\end{align*}
where the first inequality is by the selection of $f=\sum_{j=1}^{d_k}f_{j,i_j(\pi)}\mathbb{I}(I_j)\in\mathcal{F}_\epsilon$, the second inequality is because $\mathcal{N}_1(\epsilon, \mathcal{F}, n')\leq \mathcal{N}_1(\epsilon, \mathcal{F}, n)$ for any $n'\leq n$, and the last equality is due to $n=\sum_{j=1}^{d_k}{|\mathcal{D}_{n,j}|}$.
Thus, $\mathcal{F}_{\epsilon}$ forms an $\epsilon$-net of $\Pi$. The result follows by taking the logarithm of $|\mathcal{F}_{\epsilon}|$.

\end{proof}

\section{Numerical experiments}
In this section, we provide numerical examples to demonstrate the expressiveness of solution functions on two types of data: \emph{(1)} image processing, and \emph{(2)} reconstruction of SciPy test functions \footnote{\url{http://infinity77.net/go_2021/scipy_test_functions.html#scipy-test-functions-index}}. The authors  note that these are \emph{not} actual or intended applications of solution functions, but rather visual examples showing some complex functions that solution functions can represent. Throughout this section, we limit ourselves to the solution functions of linear programs.

\subsection{Image reconstruction with solution functions}
We consider an original image of $256\times 256$ and partition on its domain (coordinate-axes) using triangular partitions. In each partition, an affine function represents the RGB pixel values of the image at each corner of the domain. This function takes the input as x,y-position (for 2-dimensional images) and outputs RGB pixel values. We reconstruct each color channel separately. Now, we reconstruct this piecewise affine function as a LP solution function using CVX Matlab combination \cite{GrantBoyd_cvx, GrantBoyd_recent}. The reconstructed solution function corresponds to an LP with 2 variables and a total number of 130,050 constraints. Total computation time at $100\%$ complexity is 47 seconds.

Now, we investigate the approximation capability by removing a random subset of the inequalities in the LP; we then reconstruct the image as the solution function of the new LP (with reduced complexity). In Figure \ref{fig:image_decompose} and Table \ref{tab:number-constr}, we report complexity (number of constraints) as the percentage of the original constraints. Mean squared error (MSE) measures the difference between the original image and the reconstructed image.  We observe that MSE increases with decreasing complexity. However, it is interesting to see that the visual quality has only begun to decline beyond 90\% reduction.

\begin{table}[h]
\centering
\begin{tabular}{c|ccccc}
\textbf{}            & \multicolumn{5}{c}{\textbf{complexity}} \\ 
                     & Original        & 50\%         & 20\%&10\%&5\%         \\\hline
\textbf{\# of constraints} & 130050  & 65025  & 26010 &13000&6500\\
\textbf{MSE} & 0  & 0.0078  & 0.008 &0.026&0.061
\end{tabular}\caption{Complexity and MSEs of the reconstructed solution functions.}\label{tab:number-constr}
\end{table}

 \begin{figure*}[h!]
 \centering
 \begin{subfigure}[b]{0.19\linewidth}
 			\includegraphics[width=\linewidth]{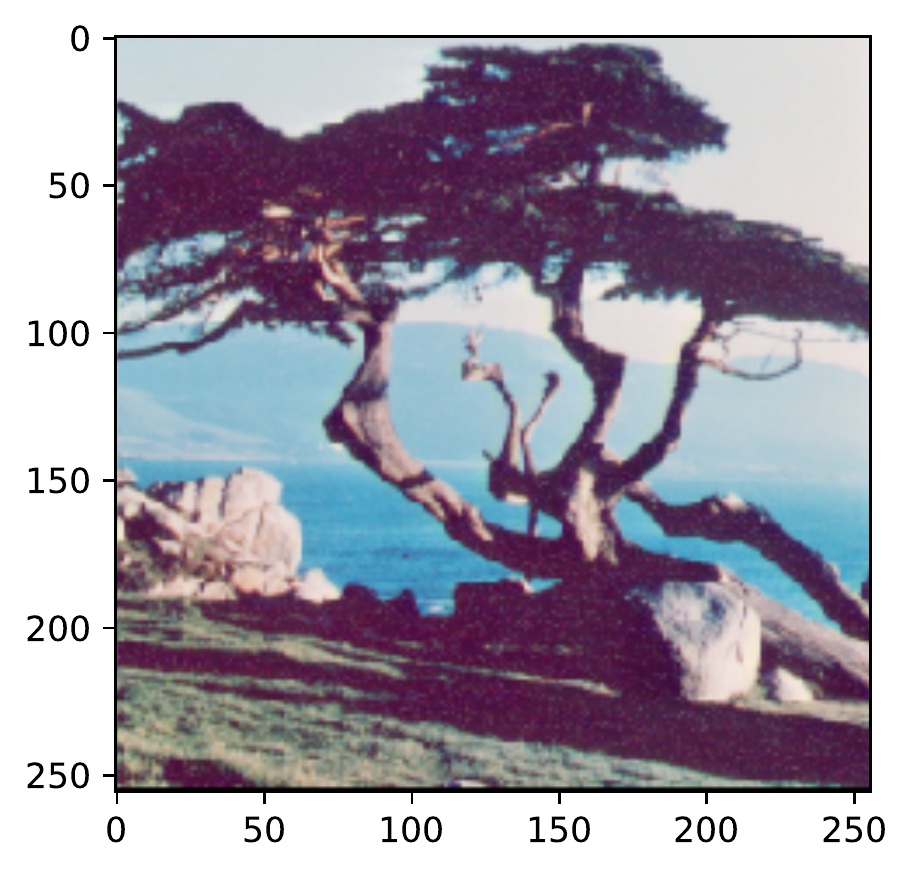}
 			\caption{Original image.}
 		\end{subfigure}
 		\begin{subfigure}[b]{0.19\linewidth}
 			 \includegraphics[width=\linewidth]{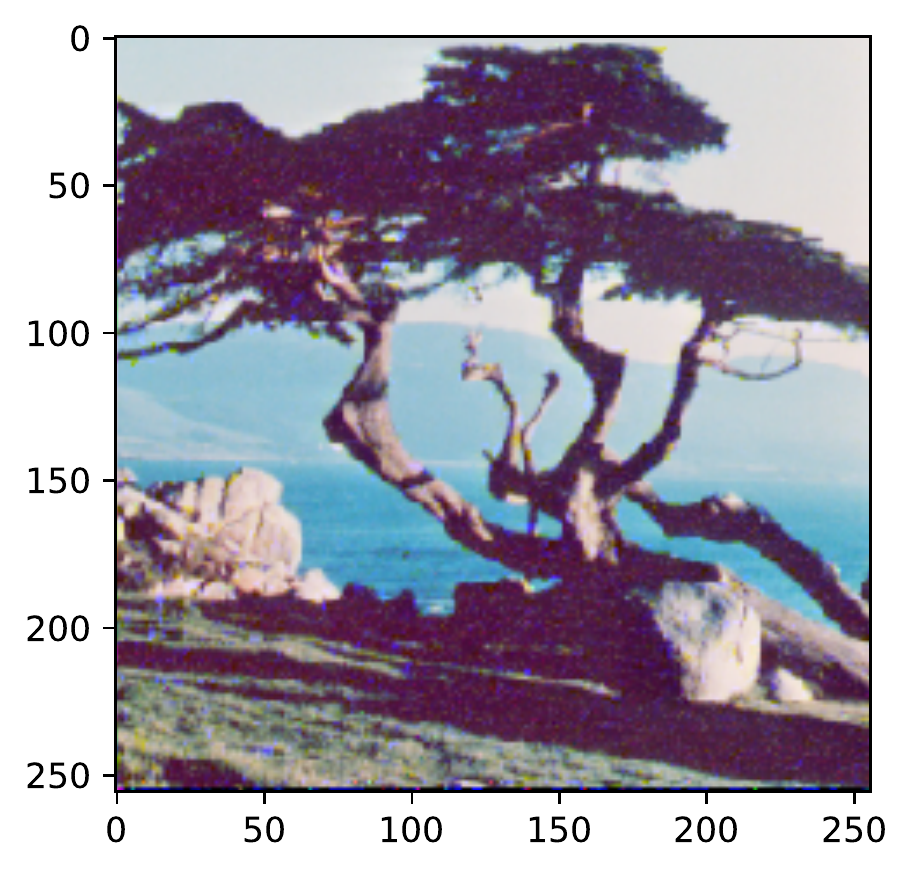}
 			\caption{$\text{complexity: }50\%$}
 		\end{subfigure}
 		\begin{subfigure}[b]{0.19\linewidth}
 			\includegraphics[width=\linewidth]{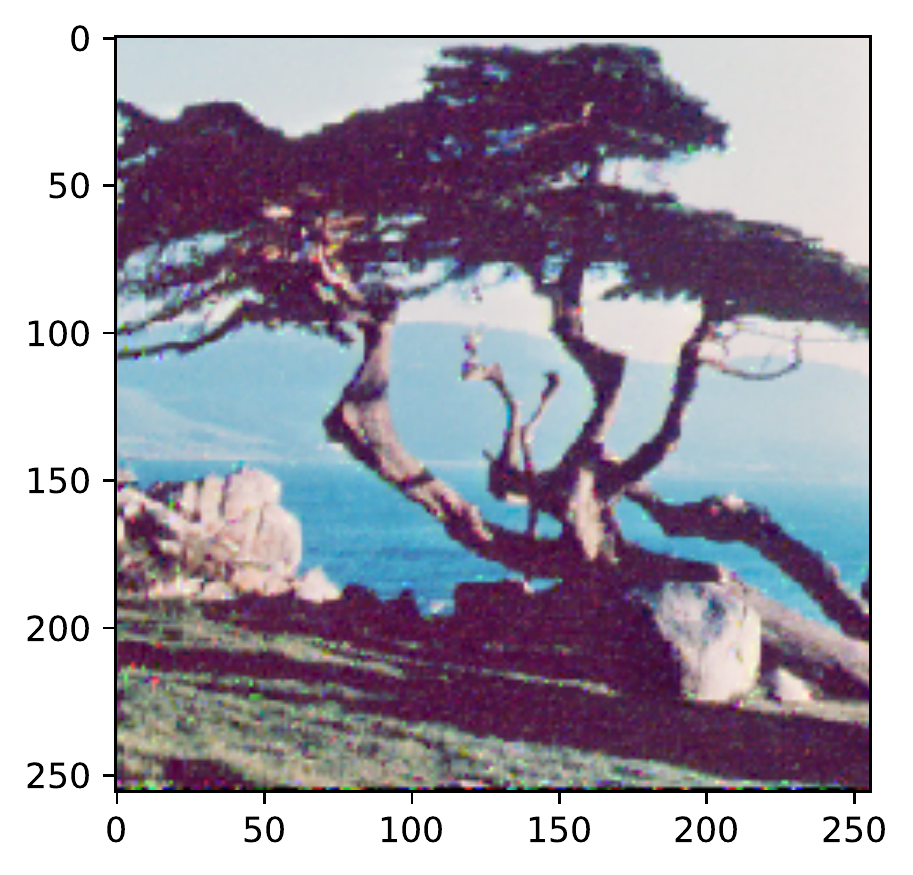}
 			\caption{$\text{complexity: }20\%$}
 		\end{subfigure}
 		\begin{subfigure}[b]{0.19\linewidth}
 			\includegraphics[width=\linewidth]{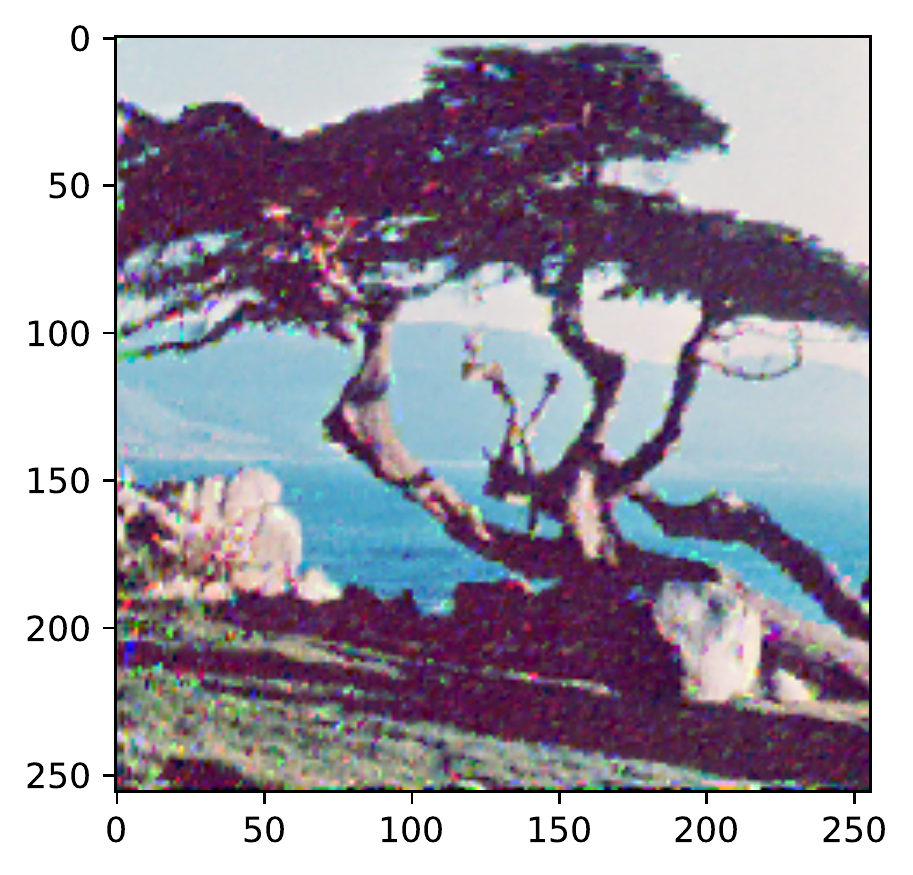}
 			\caption{$\text{complexity: }10\%$}
 		\end{subfigure}
 		\begin{subfigure}[b]{0.19\linewidth}
 			\includegraphics[width=\linewidth]{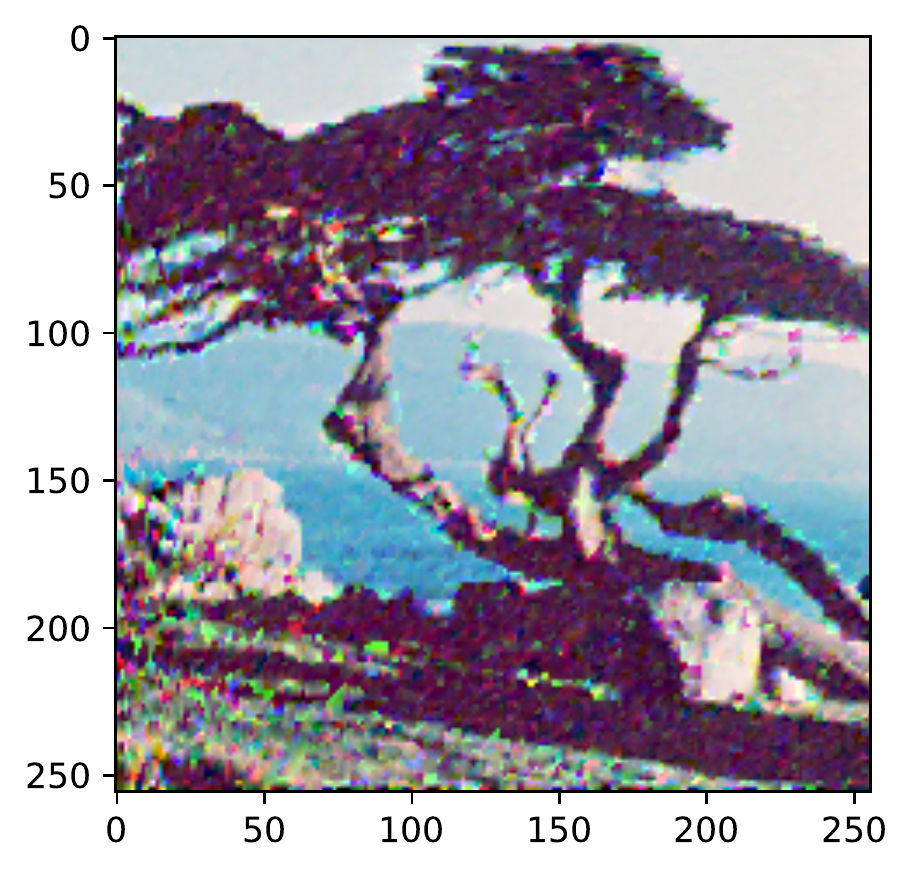}
 			\caption{$\text{complexity: }5\%$}
 		\end{subfigure} 
 		\caption{\footnotesize Image reconstruction with respect to changing complexity (number of constraints).}	
 		\label{fig:image_decompose}
 	\end{figure*}


\subsection{Reconstruction of SciPy test functions}
For the second experiment, we verify the approximation power of layered architecture on benchmark test functions \footnote{\url{http://infinity77.net/go_2021/scipy_test_functions.html#scipy-test-functions-index}}. For ease of presentation, in this experiment, we have selected three 2-dimensional functions, namely, Alpine, Parsopoulos, TridiagonalMatrix, and a four-dimensional function, i.e., Powell. For these functions, we also report the reconstruction error by increasing the number of partitions ($N$) as in Theorem \ref{thm:multilayer_n}. Note that these functions belong to $C^\infty$, which is a superset of $C^k$ for any $k$; we report the reconstruction error for different smoothing orders $k$. From Figures \ref{fig:Alpine}, \ref{fig:parsopoulos}, \ref{fig:TridiagonalMatrix}, and \ref{fig:Powell}, we can accurately reconstruct the test functions. The MSE plots show that the error decreases as the number of partitions ($N$) increases.

In Figure \ref{fig:Alpine}, we reconstruct an objective function of Alpine multimodal minimization problem, defined for $n_x$-dimension input $x\in\mathbb{R}^{n_x}$ as follows $$f(x) = \prod_{i=1}^{n_x}\sqrt{x_i} \sin{x_i} .$$ For ease of representation, we reconstruct this in 2-dimension $(n_x=2)$ for the inputs $x_i\in[0,10]$ for $i = 1,2$. Figure \ref{fig:Alpine} on the left shows the original and reconstructed functions, which look almost identical; on the right is the decreasing MSE with increasing partitions $N$. We have reconstructed this for smoothness orders $k \in \{3, \dots, 10\}$. The higher the order, the fewer partitions are needed to achieve the same accuracy. Note that the reconstruction of the first plot in Figure \ref{fig:Alpine} is for maximum smoothness $k = 6$.  
\begin{figure}[H]
	\centering
	\includegraphics[width=\linewidth]{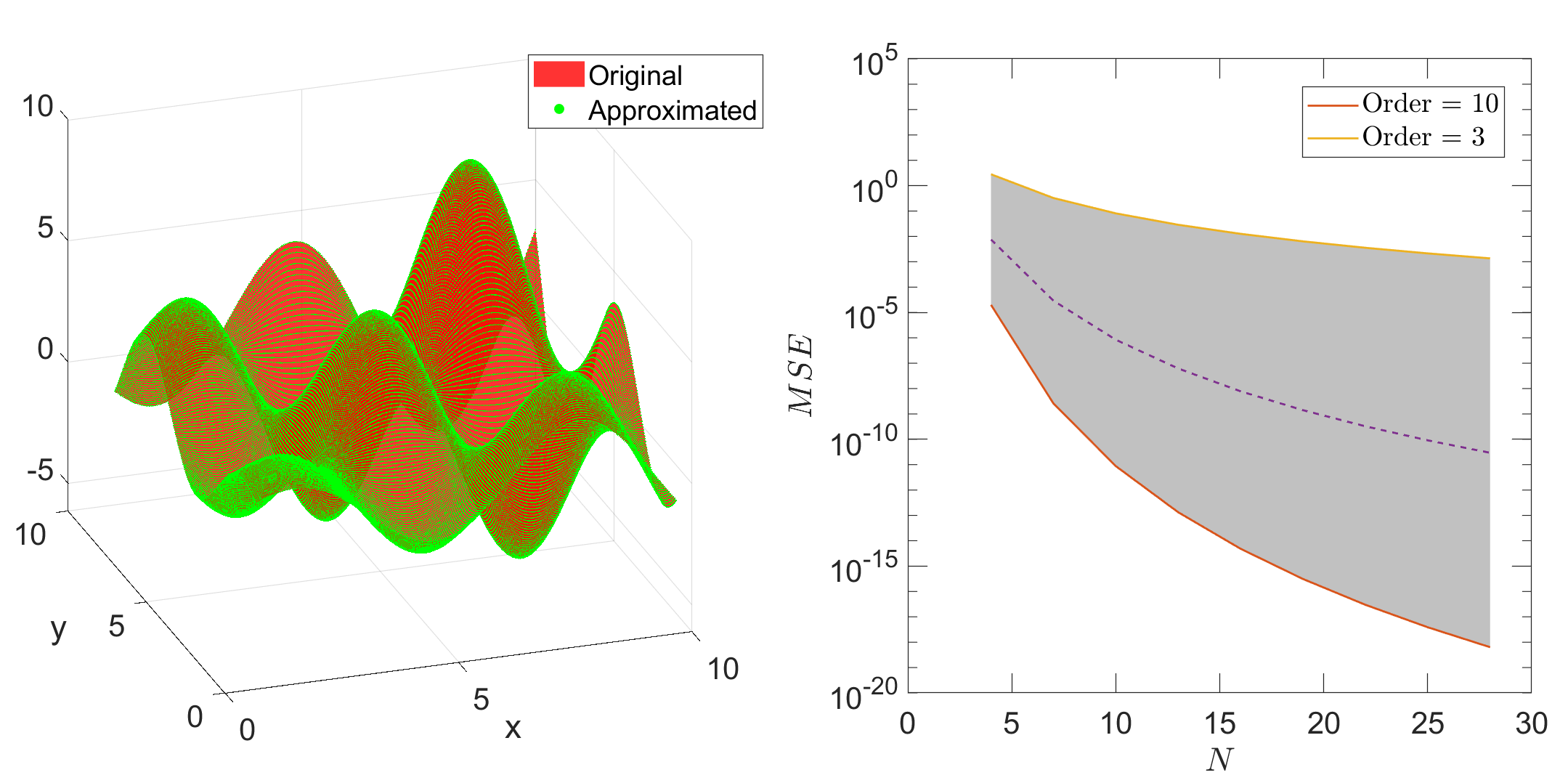}
	\caption{\footnotesize Reconstruction of Alpine. \label{fig:Alpine}}
\end{figure}

In Figure \ref{fig:parsopoulos}, we reconstruct  an objective function of a Parsopoulos multimodal minimization problem,
$$f(x) = \cos(x_1)^2 + \sin(x_2)^2.$$
We solve this problem for $x_i\in[-5, 5]$ for $i = 1, 2$. In Figure \ref{fig:parsopoulos}, the first plot is the reconstruction for smoothness $k = 6$ and the second is the decreasing MSE error with increasing partitions $N$ for maximum smoothness $k \in \{4,\dots, 7\}$. 
 	\begin{figure}[H]
 		\centering
 		\includegraphics[width=\linewidth]{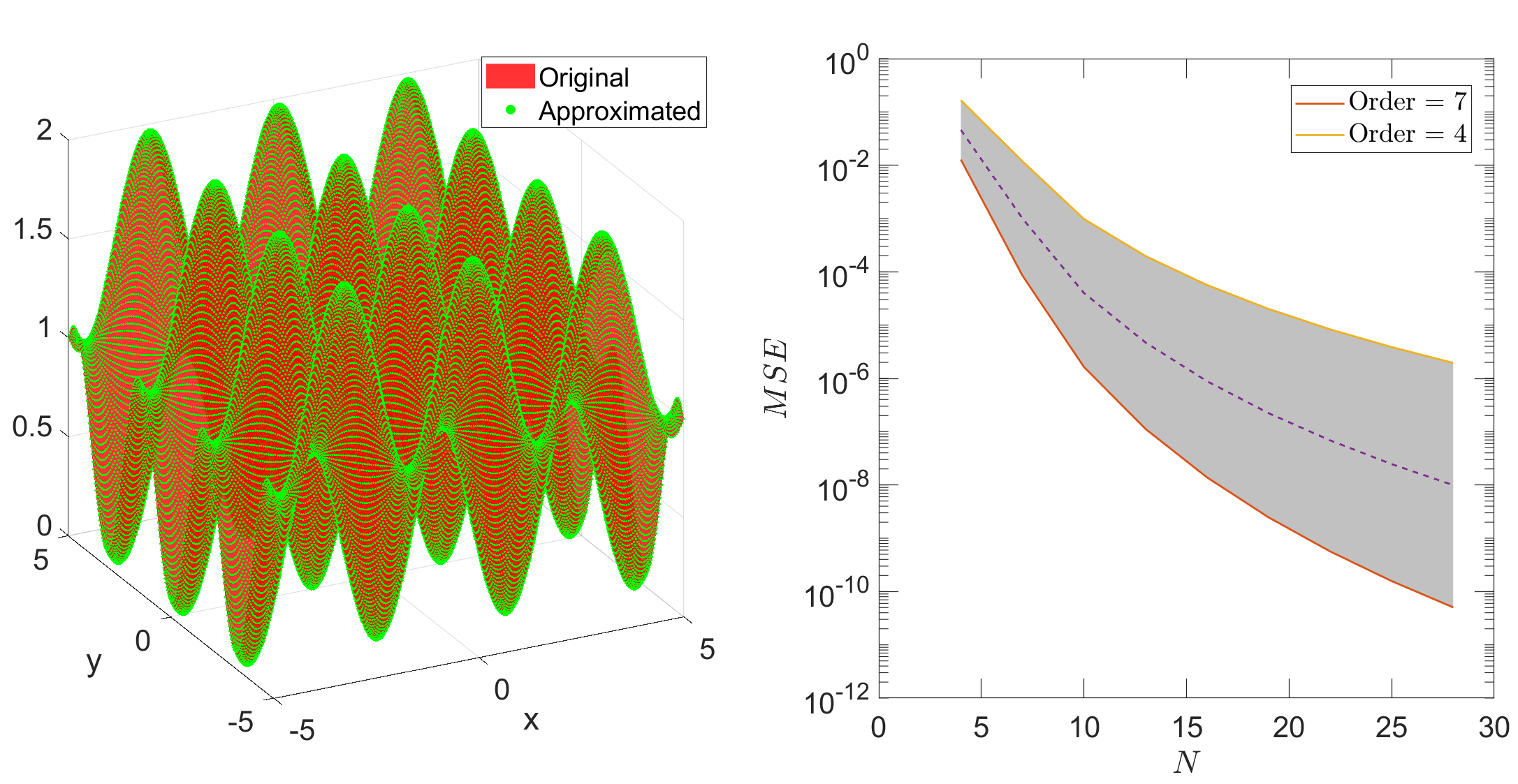}
 		\caption{\footnotesize Reconstruction of Parsopoulos. \label{fig:parsopoulos}}
 	\end{figure}
 	
 	In Figure \ref{fig:TridiagonalMatrix}, we  reconstruct an objective function of Trid multimodal minimization problem,
 	$$f(x) = \sum_{i=1}^k (x_i - 1)^2 - \sum_{i=1}^k (x_i - 1)^2. $$ The function is reconstructed for $x_i\in [-20, 20]$ for $i \in \{1, 2\}$. 
 	\begin{figure}[H]
 		\includegraphics[width=\linewidth]{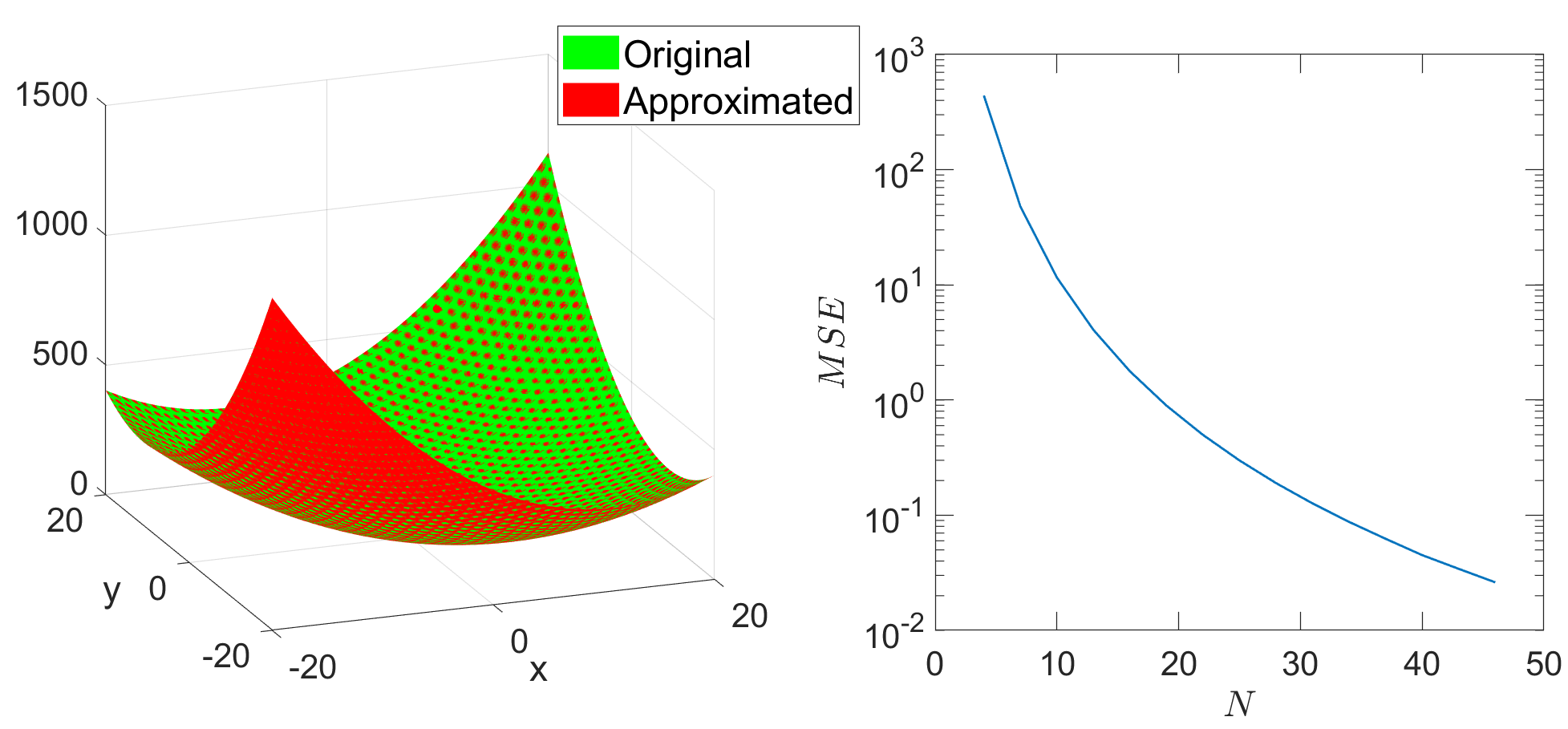}
 		\caption{\footnotesize Reconstruction of TridiagonalMatrix. \label{fig:TridiagonalMatrix}}
 	\end{figure}
 	  For Figure \ref{fig:Powell}, we consider an objective function of multimodel optimization problem, Powell in 4-D, as follows	 	
 	  $$f(x) = (x_3-10x_1)^2+ 5(x_2-x_4)^2 + (x_1-2x_2)^4 + 10 (x_3-x_4)^4.$$
 	   	 We reconstruct the above for $x_i\in [-4, 5],$ for $i \in \{1, \dots, 4\}.$ Note that, this is a 4-D function and, for representation, we only provide MSE variation  with changing the number of partitions $N$. 
 	\begin{figure}[H]
 		\centering
 		\includegraphics[width=0.5\linewidth]{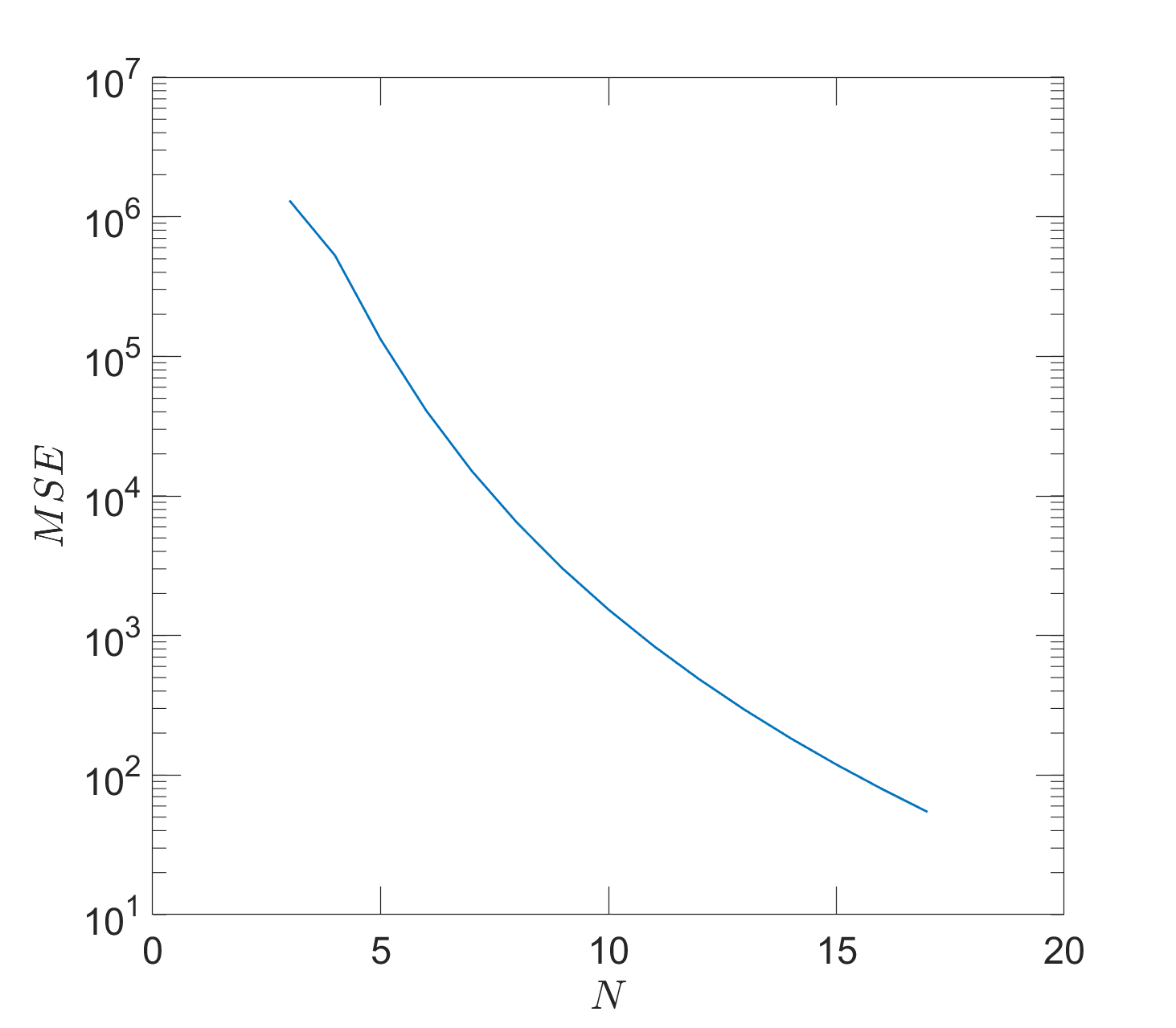}
 		\caption{\footnotesize MSE of the reconstruction for Powell (4-D).\label{fig:Powell}}
 	\end{figure}

\end{document}